\documentclass[letterpaper]{article} 
\usepackage[preprint]{aaai2027}  
\usepackage[hyphens]{url}  
\usepackage{graphicx} 
\usepackage{natbib}  
\usepackage{caption} 
\usepackage{algorithm}
\usepackage{algorithmic}

\usepackage{newfloat}
\usepackage{listings}
\DeclareCaptionStyle{ruled}{labelfont=normalfont,labelsep=colon,strut=off} 
\floatstyle{ruled}
\newfloat{listing}{tb}{lst}{}
\floatname{listing}{Listing}

\usepackage{booktabs}
\usepackage{amsmath,amssymb,amsthm}
\usepackage[table]{xcolor}
\usepackage{multirow}

\newtheorem{theorem}{Theorem}
\newtheorem{proposition}{Proposition}
\newtheorem{corollary}{Corollary}

\title{
PIT-SUN: A Deployable Empirical Marginal Transform Framework with Expectation-Consistent Recovery for Regression in Recommender Systems}
\author{
    Mingyu Zhao\textsuperscript{\rm 1}\thanks{Work performed during an internship at Kuaishou Technology.},
    Zhaohan Li\textsuperscript{\rm 2},
    Zhenxiong Miao\textsuperscript{\rm 2},
    Xu Zhang\textsuperscript{\rm 2},
    Dewei Leng\textsuperscript{\rm 2},
    Yanan Niu\textsuperscript{\rm 2},
    Kun Gai\textsuperscript{\rm 2}
}
\affiliations{
    \textsuperscript{\rm 1}Renmin University of China\\
    \textsuperscript{\rm 2}Kuaishou Technology
}

\begin{document}

\maketitle
\begin{abstract}
Estimating original-space conditional expectations is central to value-driven recommender systems, including dwell time, GMV, and LTV forecasting. Standard MSE is expectation-consistent in principle, but its gradients become unstable on heavy-tailed, zero-inflated, and multimodal targets, causing mean collapse and tail shrinkage. Target transformation alleviates this scale conflict, yet any useful nonlinear marginal transform loses expectation consistency under direct inversion. This is not an implementation oversight: a direct inverse-transform estimator is universally expectation-consistent only when the inverse transform is affine, which cannot simultaneously provide bounded tail compression. Existing conditionally linear recovery methods restore expectation consistency, but still leave open which coordinate, inverse lookup, recovery base, and deployment monitor should be selected for sparse complex marginals. We propose \textbf{P}robability-\textbf{I}ntegral-\textbf{TranS}formed \textbf{Un}biased recovery (\textbf{PIT-SUN}), a deployable empirical marginal recovery framework. PIT-SUN uses one empirical marginal table to define a bounded normal-score coordinate, its inverse-quantile lookup, a variance-controlled recovery base, and drift monitoring, then applies multiplicative SUN recovery to estimate the original-space expectation instead of directly inverting transformed predictions. Experiments on synthetic distributions, public benchmarks, large-scale industrial datasets, and online deployment show robust improvements in point accuracy, calibration, and ranking quality with lightweight deployment overhead.
\end{abstract}

\section{Introduction}

Many recommender tasks---including dwell time, GMV, and LTV forecasting---predict continuous business values for ranking, allocation, and multi-objective decisions. Here, \textbf{dwell time} denotes user video watching time~\cite{Davidson2010YouTube,Yi2014DwellTime,Wu2018BeyondViews,Wang2020DwellTime}, while LTV and GMV support customer acquisition and value-driven marketing~\cite{ziln,Mdme,OptDist2024}. As shown in Figure~\ref{fig:overview}, these targets are often sparse, heavy-tailed, multimodal, and context-dependent. Although original-space MSE has $m(x):=\mathbb{E}[Y\mid X=x]$ as its Bayes optimum, finite-sample gradients are exposed to extreme values, causing \textit{mean collapse} or \textit{tail shrinkage}.

Monotonic transformations such as log, sqrt, or Box-Cox~\cite{Bartlett1936Sqrt,Bartlett1947Transform,Sakia1992BoxCox} stabilize labels but introduce retransformation bias~\cite{NeymanScott1960Bias,Duan1983Smearing}: $T^{-1}(\mathbb{E}[T(Y)\mid x]) \neq m(x)$. This is structural: if direct inversion were expectation-preserving for all conditional distributions, $T^{-1}$ would have to be affine, ruling out nonlinear tail compression. Thus recommender regression needs more than a better scalar transform. Existing recovery methods such as TranSUN/GTS~\cite{Transun/gts} restore expectation consistency, but still rely on manual choices of transformed target and recovery base. The gap is a deployable \emph{closure}: the coordinate, inverse lookup, recovery base, and empirical monitor must be specified jointly.

We formalize a \textbf{deployable empirical marginal recovery closure}: one empirical marginal table defines the stable coordinate, inverse lookup, recovery base, and drift monitor while preserving $m(x)=\mathbb{E}[Y\mid x]$. This explains why standard transformations, PIT-only regression, and manual-transform recovery each remain incomplete. We instantiate this closure with \textbf{P}robability-\textbf{I}ntegral-\textbf{TranS}formed \textbf{Un}biased recovery (\textbf{PIT-SUN}), summarized in Figure~\ref{fig:overview}. Our main contributions are:

\begin{figure*}[t]
    \centering
    \includegraphics[width=\textwidth]{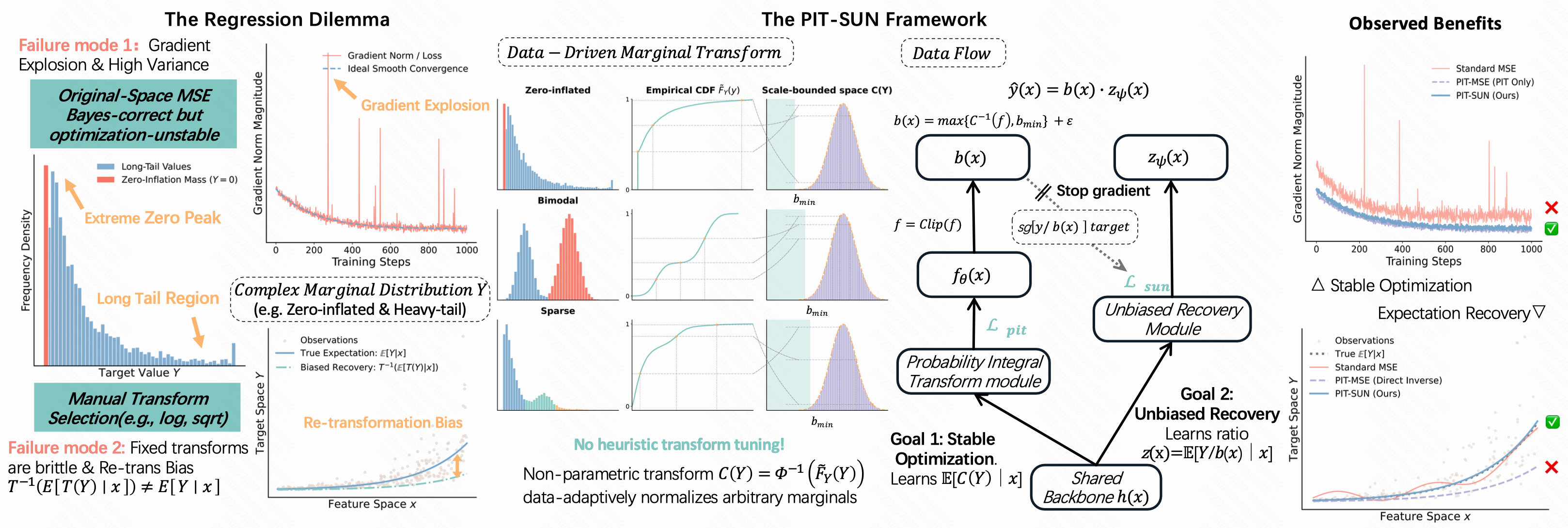}
    \caption{\textbf{Overview of PIT-SUN.} Empirical PIT provides a bounded coordinate, and SUN recovery maps it back to the original-space expectation through a stabilized inverse-quantile base.}
    \label{fig:overview}
\end{figure*}

\begin{itemize}
    \item \textbf{Impossibility-guided formulation.} We show that nonlinear stable coordinates cannot be universally expectation-preserving under direct inversion, motivating a coordinate--base--recovery closure rather than another scalar transform.
    \item \textbf{PIT-SUN closure.} We use one CDF table for the PIT coordinate, inverse lookup, inverse-quantile recovery base, and drift monitoring, with lower-bound stabilization and stop-gradient isolation.
    \item \textbf{Empirical validation.} We validate PIT-SUN on synthetic, public, industrial offline, and online settings, with PIT-TranSUN and component diagnostics isolating why the complete closure is needed.
\end{itemize}

\section{Related Work}

\subsubsection{Regression Targets and Existing Adaptation Strategies}
Continuous recommender targets such as dwell time, GMV, and LTV are sparse, heavy-tailed, multimodal, and context-dependent. Existing adaptations fall into three groups.

\textbf{Distributional and value-aware modeling.} ZILN~\cite{ziln} models customer value with a zero-inflated lognormal; WLR~\cite{wlr} weights impressions by dwell time; EGMN~\cite{Egmn} uses an Exponential-Gaussian mixture for short-video dwell time; MDME~\cite{Mdme} decomposes LTV into sub-distribution experts; and OptDist~\cite{OptDist2024} adaptively selects sub-distributions. Mixture density networks and normalizing flows~\cite{Bishop1994MDN,Papamakarios2021NF} further expand conditional output families. These methods are expressive, but rely on distributional assumptions, mixture design, or selection mechanisms rather than a lightweight point-expectation recovery path.

\textbf{Discretized and structured regression.} TPM~\cite{Tpm} decomposes dwell time into tree-structured classifications; CREAD~\cite{Cread} combines adaptive discretization with restoration; GR~\cite{Gr} predicts time tokens; and CCOR-Net~\cite{ccor-net} decouples ranking and residual regression for LTV. These methods reduce local fitting difficulty, but introduce buckets, routing, vocabularies, or restoration modules.

\textbf{Debiasing and quantile preference signals.} D2Q~\cite{D2Q2022}, DVR~\cite{DVR2022}, CVRDD~\cite{CVRDD2023}, D2Co~\cite{D2Co2023}, CWM~\cite{CWM2024}, and RAD~\cite{RAD} correct dwell-time bias, noisy watching, truncation, or relative-advantage labels. This line is complementary to PIT-SUN, which targets expectation-consistent regression under complex empirical marginals.

\subsubsection{Target Transformation and Expectation Recovery}
Another route keeps point regression but transforms labels. Classical square-root, logarithmic, and Box-Cox transformations~\cite{Bartlett1936Sqrt,Bartlett1947Transform,Sakia1992BoxCox} stabilize non-Gaussian responses; delta-method and smearing estimators~\cite{NeymanScott1960Bias,Duan1983Smearing} correct retransformation bias. Loss-level reweighting is complementary, but may trade aggregate calibration for tail emphasis. The core issue remains that direct inversion estimates $T^{-1}(\mathbb{E}[T(Y)\mid x])$ instead of $\mathbb{E}[Y\mid x]$.

TranSUN/GTS~\cite{Transun/gts} restore expectation consistency through conditionally linear recovery, while conditional transformation models~\cite{Hothorn2014CTM,Most2016CLTM} study related transformation ideas for distributional or interval estimation. Yet existing recovery-based point-regression methods typically depend on manually selected transformations and recovery bases, treating the coordinate and recovery path as decoupled analytical components. Simply replacing the manual transform with an empirical PIT map is not sufficient, as shown by the PIT-TranSUN diagnostic baseline and the drift analyses in Appendix~D.5. PIT-SUN instead treats the empirical marginal table as a unified recovery closure, jointly defining the stable coordinate, inverse lookup, stabilized recovery base, and deployment monitor, rather than serving as another manual transform or empirical plug-in.
\begin{table}[t]
\centering
{\fontsize{9pt}{10pt}\selectfont\setlength{\tabcolsep}{2pt}
\begin{tabular}{llll}
\toprule
Method & Coord. source & Recovery path & Deploy. object \\
\midrule
T-MSE & manual $T$ & none & transform rule \\
TranSUN & manual $T$ & cond.-linear & $T$ + base \\
PIT-MSE & empirical CDF & direct inverse & CDF table \\
\textbf{PIT-SUN} & empirical CDF & SUN + inv.-quantile & shared CDF table \\
\bottomrule
\end{tabular}
}
\caption{Positioning of PIT-SUN among transform-based recovery methods.}
\label{tab:method_positioning}
\end{table}

\subsubsection{Probability Integral Transform}
The PIT, normal-score transform, and rank Gaussianization map arbitrary marginals to uniform or normal references. In supervised regression, however, PIT-only prediction remains biased after direct inversion: $C^{-1}(\mathbb{E}[C(Y)\mid x]) \neq \mathbb{E}[Y\mid x]$. PIT-SUN therefore uses PIT not as a standalone transform, but as the coordinate in an expectation-recovery closure. Unlike quantile-regression and quantile-debiasing methods~\cite{Koenker1978QR,Meinshausen2006QRF,CQE2025,AlignPxtr2025,RAD}, which estimate conditional quantiles or debiased preference labels, PIT-SUN uses the empirical marginal CDF to support stable regression, recovery-base construction, and original-space expectation recovery.

\section{Preliminaries}
\label{sec:preliminaries}
\subsection{Regression Dilemma: Stability vs. Expectation}
Let $X \in \mathcal{X}$ be the feature vector and $Y \in \mathbb{R}_{\ge 0}$ be the non-negative target. We denote the original-space conditional expectation by $m(x):=\mathbb{E}[Y\mid X=x]$. Original-space MSE is statistically aligned with this target, but in recommender data the labels are often heavy-tailed, sparse, or multimodal. The resulting gradients are dominated by rare large values or zero plateaus, making the correct objective difficult to optimize reliably in finite samples.

A common response is to learn in a transformed space. For a monotone transform $T$, transformed MSE minimizes
\begin{equation}
\mathcal{L}_{T\text{-MSE}}
=\mathbb{E}_{(x,y)}\bigl[(f(x)-T(y))^2\bigr],
\end{equation}
whose population optimizer is
\begin{equation}
f_T^*(x)=\mathbb{E}[T(Y)\mid x].
\end{equation}
This can produce a much better-conditioned learning coordinate, but direct inversion gives $T^{-1}(f_T^*(x))$ rather than $m(x)$ in general. The issue is not a particular choice of log, sqrt, or PIT; it is that nonlinear compression changes the expectation being estimated. Thus the basic dilemma is: the original scale has the right target but unstable optimization, while the transformed scale has a stable coordinate but biased direct recovery.

\subsection{Conditionally Linear Recovery}
Recovery-based methods resolve the expectation side by making the last step conditionally linear in $Y$. Given a learned representation $f_T(x)$ and a positive scale function $b(x)$ fixed conditional on $x$, the recovery branch learns the proportional target
$
z(x) \approx \mathbb{E}\left[\frac{Y}{b(x)}\middle|x\right].$
The final prediction is then
\begin{equation}
\hat{y}(x)=z(x)b(x),
\end{equation}
so at the population optimum $\hat{y}(x)=m(x)$. This principle explains how expectation consistency can be restored without giving up a nonlinear learning coordinate.

However, the principle does not specify which coordinate to learn or which base $b(x)$ to use. These choices determine whether recovery is numerically stable under sparse and heavy-tailed marginals. PIT-SUN therefore treats the coordinate and the recovery base as a coupled design problem: the next section derives both from one empirical marginal table and adds the stabilization needed for deployment.

\section{Methodology}
\label{sec:methodology}

Figure~\ref{fig:overview} gives an overview of PIT-SUN; the following subsections define its empirical PIT coordinate, inverse-quantile base, and SUN recovery objective.

\subsection{Empirical PIT: Scale-Bounded Learning Space}
The empirical PIT branch replaces manual analytical transforms with a data-induced coordinate. Let $\hat{F}_Y$ be the empirical CDF, $\Phi$ the standard normal CDF, and $\delta$ a boundary clipping probability. We define
\begin{equation}
\tilde{F}_Y(y)=\operatorname{clip}\bigl(\hat{F}_Y(y),\delta,1-\delta\bigr),
\quad
C(y)=\Phi^{-1}(\tilde{F}_Y(y)).
\end{equation}
Its generalized inverse is implemented by the empirical quantile table,
\begin{equation}
C^{-1}(w)=\hat{F}_Y^{-1}\bigl(\Phi(w)\bigr).
\end{equation}
The branch $f_\theta(x)$ is trained with
\begin{equation}
\mathcal{L}_{\mathrm{PIT}}
=\mathbb{E}_{(x,y)}\bigl[(f_\theta(x)-C(y))^2\bigr].
\end{equation}
This module directly targets the first regression difficulty: regardless of the native target scale, clipped PIT satisfies $|C(Y)|\le a_\delta=\Phi^{-1}(1-\delta)$. It therefore turns heavy-tailed amplitudes into bounded normal-score coordinates while preserving marginal order.

\subsection{Inverse-Quantile Base and SUN Recovery}
The PIT branch alone is not used as the final predictor, because $C^{-1}(\mathbb{E}[C(Y)\mid x])$ is still a biased estimator of $m(x)$. Instead, the predicted PIT coordinate is converted into an original-scale recovery base. With $\bar{f}(x)=\operatorname{clip}(f_\theta(x),-a_\delta,a_\delta)$, we define
\begin{equation}
\label{eq:base}
b(x)=\max\Bigl(C^{-1}(\bar{f}(x)),\,b_{\min}\Bigr)+\varepsilon .
\end{equation}
This inverse-quantile base closes the gap between the stable PIT coordinate and the original business scale. It is a conditioning device, not another transform family: it maps the learned marginal rank into an original-scale denominator, while $b_{\min}$ prevents zero-plateau denominator collapse and controls recovery-label variance.

The SUN recovery branch learns the proportional target under gradient isolation:
\begin{equation}
\mathcal{L}_{\mathrm{SUN}}
=\mathbb{E}_{(x,y)}\left[
\left(z_\psi(x)-\frac{y}{\operatorname{sg}[b(x)]}\right)^2
\right],
\end{equation}
where $\operatorname{sg}[\cdot]$ denotes stop-gradient. In implementation we stop the denominator path: the proportional label is formed as $r=y/\operatorname{sg}[b(x)]$. Equivalently, the recovery loss does not backpropagate through the inverse-quantile lookup, clipping operation, or $b_{\min}$ floor, while the ratio head $z_\psi$ remains trainable. This branch addresses the second regression difficulty: it restores a conditionally linear path to $m(x)$ while preventing high-variance proportional labels from corrupting the PIT geometry. Without this isolation, high-variance ratio residuals can distort the PIT coordinate through the denominator path, increasing the joint-training gap analyzed after Theorem~3.

\subsection{PIT-SUN Training and Inference}
The full objective is
\begin{equation}
\mathcal{L}_{\mathrm{PIT\text{-}SUN}}
=\mathcal{L}_{\mathrm{PIT}}+\lambda\mathcal{L}_{\mathrm{SUN}}.
\end{equation}
At inference time, PIT-SUN multiplies the learned recovery ratio and the inverse-quantile base:
\begin{equation}
\hat{y}(x)=z_\psi(x)b(x).
\end{equation}
For non-negative targets, the ratio head is implemented with a non-negative output link, e.g., softplus or clipped ReLU, so that $\hat{y}(x)\ge0$. The resulting estimator keeps the optimization benefit of PIT and the expectation target of original-space MSE. Its deployment overhead is limited to a sliding-window quantile table and two lightweight regression heads. The full procedure is summarized in Algorithm~\ref{alg:pitsun}.

\begin{algorithm}[tb]
\caption{Training and Inference of PIT-SUN}
\label{alg:pitsun}
\textbf{Input}: Training data $\{(x_i, y_i)\}_{i=1}^N$, test data $X_{\text{test}}$, clipping probability $\delta$, quantile $q_b$, weight $\lambda$.
\begin{algorithmic}[1]
\STATE Construct empirical CDF $\hat{F}_Y$ from $\{y_i\}$.
\STATE Define $C(y) = \Phi^{-1}(\operatorname{clip}(\hat{F}_Y(y), \delta, 1-\delta))$.
\STATE Construct lookup table $C^{-1}(w) = \hat{F}_Y^{-1}(\Phi(w))$.
\STATE Set $b_{\min}$ as the $q_b$-quantile of the positive targets.
\FOR{each minibatch $(x, y)$}
    \STATE $c \leftarrow C(y)$
    \STATE $f \leftarrow f_\theta(x)$, \quad $\bar{f} \leftarrow \operatorname{clip}(f, -a_\delta, a_\delta)$
    \STATE $b \leftarrow \max(C^{-1}(\bar{f}), b_{\min}) + \varepsilon$ \quad \COMMENT{floor convention used throughout}
    \STATE $r \leftarrow y / \operatorname{sg}[b]$ \quad \COMMENT{stop only the base/lookup path}
    \STATE $\mathcal{L}_{\text{PIT}} \leftarrow \|f - c\|_2^2$, \quad $\mathcal{L}_{\text{SUN}} \leftarrow \|z_\psi(x) - r\|_2^2$
    \STATE Update $\theta, \psi$ by minimizing $\mathcal{L}_{\text{PIT}} + \lambda \mathcal{L}_{\text{SUN}}$
\ENDFOR
\STATE \textbf{Inference for $x \in X_{\text{test}}$:}
\STATE $f \leftarrow f_\theta(x)$, \quad $\bar{f} \leftarrow \operatorname{clip}(f, -a_\delta, a_\delta)$
\STATE $b \leftarrow \max(C^{-1}(\bar{f}), b_{\min}) + \varepsilon$
\STATE \textbf{return} $\hat{y} = z_\psi(x) \cdot b$
\end{algorithmic}
\end{algorithm}

\subsection{Practical Handling of Sparse and Tied Targets}
Repeated zeros and ties are common in GMV targets. PIT-SUN handles atoms with a generalized empirical PIT,
\begin{equation}
\tilde{F}_Y(y)=\hat{F}_Y(y^-)+\rho\bigl(\hat{F}_Y(y)-\hat{F}_Y(y^-)\bigr),\rho\in[0,1]
\end{equation}
using deterministic mid-ranks by default and randomized ranks only as an optional tie-breaking variant. This keeps sparse targets within quantile intervals without adding a hand-crafted smoothing transform. For extreme zero-inflation, we use PIT-SUN-ZI, which factorizes occurrence and positive amount: a hurdle branch estimates $P(Y>0 \mid x)$, while PIT-SUN on positive samples estimates $\mathbb{E}[Y \mid x, Y>0]$. The final prediction is $\hat y(x)=P(Y>0 \mid x)\mathbb{E}[Y \mid x, Y>0]$; implementation details are discussed in Appendix~B.7-8.

\section{Theoretical Justification}
\label{sec:theory}
We summarize the formal properties governing the PIT-SUN closure. Complete proofs and analyses of empirical boundaries (e.g., clipping, tied atoms) are deferred to Appendices A and B.

\begin{theorem}[No Nonlinear Direct Inversion]
Let $T$ be strictly monotone on an interval $\mathcal{I}$ containing the support of $Y\mid x$. Let $g=T^{-1}$ be measurable and locally bounded on $T(\mathcal{I})$. If direct inversion satisfies 
\begin{equation}
g\bigl(\mathbb{E}[T(Y)\mid x]\bigr)=\mathbb{E}[Y\mid x]
\end{equation}
for every conditional distribution of $Y\mid x$ supported on $\mathcal{I}$ with finite mean, then $g$ must be affine on every convex sub-interval of the support of $T(Y)$. 
\end{theorem}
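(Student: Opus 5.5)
Write $U=T(Y)$, so that $Y=g(U)$. The hypothesis then reads $g(\mathbb{E}[U\mid x])=\mathbb{E}[g(U)\mid x]$ for every admissible conditional law of $U$ on $J:=T(\mathcal{I})$. This is Jensen's equality holding for \emph{all} distributions, and the plan is to show that this forces the midpoint-affine (Jensen) functional equation, which upgrades to affinity under local boundedness. Because $T$ is strictly monotone, $J$ is an interval and $g$ is a bijection from $J$ onto $\mathcal{I}$. Any finitely supported law of $U$ on $J$ therefore corresponds to a finitely supported law of $Y$ on $\mathcal{I}$, which has finite mean. Such laws are admissible, so the hypothesis may be applied to them freely.

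\textbf{Step 1 (two-point laws).} Fix a convex sub-interval $K$ of the support of $T(Y)$; it lies inside $J$. For $u,v\in K$ and $p\in[0,1]$, take the law with $P(U=u)=p$ and $P(U=v)=1-p$. The hypothesis gives
\begin{equation}
g\bigl(pu+(1-p)v\bigr)=p\,g(u)+(1-p)\,g(v),
\end{equation}
where $pu+(1-p)v\in K$ by convexity. In particular $p=\tfrac12$ gives Jensen's equation $g(\tfrac{u+v}{2})=\tfrac{g(u)+g(v)}{2}$ on $K$.

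\textbf{Step 2 (conclude affinity).} The two-point identity already holds for every real $p$, not only dyadic ones. To see that it says directly that $g$ is affine, fix $u<v$ in $K$ and let $\ell$ be the chord through $(u,g(u))$ and $(v,g(v))$. Every $w\in[u,v]$ can be written $w=pu+(1-p)v$, and then $g(w)=\ell(w)$. Exhausting $K$ by increasing compact subintervals $[u_n,v_n]$ gives chords that agree on overlaps, so they define a single affine function equal to $g$ on all of $K$. Measurability and local boundedness are therefore not strictly needed once arbitrary $p$ is allowed. I would remark that these hypotheses are exactly what the classical argument requires if only the midpoint ($p=\tfrac12$) equation were available: Jensen's equation together with local boundedness implies continuity by the Bernstein--Doetsch theorem, and continuous Jensen-affine functions are affine.

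\textbf{Main obstacle.} The only delicate point is admissibility. I need the two-point laws to count as "conditional distributions of $Y\mid x$ supported on $\mathcal{I}$." This holds because the statement quantifies over every such distribution and $g(u),g(v)\in\mathcal{I}$. A second subtlety is endpoints of $K$ that may or may not be included. These are handled automatically, since the chord argument applies to any pair of points of $K$, including endpoints that belong to $K$. If one insisted on laws with positive density, one would instead mix a two-point law with small uniform kernels around $u$ and $v$ and pass to the limit. That limit is where local boundedness (dominated convergence on compacts) and continuity from Step 2's Bernstein--Doetsch route would enter.
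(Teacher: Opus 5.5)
Your proof is correct and is essentially the paper's argument: rewrite the hypothesis in terms of $W=T(Y)$, apply it to two-point laws on a convex sub-interval for every $p\in[0,1]$, and conclude that $g$ agrees with each of its chords. You also observe that this all-$p$ chord identity already forces affinity directly, so the paper's closing appeal to ``a measurable or locally bounded Jensen-affine function is affine'' is not needed; that is a valid sharpening of the same route. (One harmless slip: strict monotonicity alone does not make $T(\mathcal{I})$ an interval, but you only work inside $K$.)
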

This establishes that any nonlinear tail-compressing transform intrinsically loses universal expectation consistency, necessitating a separate recovery path.

\begin{theorem}[Bias of PIT-only inversion]
Let $C_0(y)=\Phi^{-1}(F_Y(y))$ be the unclipped, continuous PIT map and $\mu_C=\mathbb{E}[C_0(Y)\mid x]$. Under locally twice-differentiable $C_0^{-1}$, PIT-MSE yields $f^*(x)=\mu_C$. Direct inversion exhibits Taylor bias:
\begin{equation}
m(x)-C_0^{-1}(\mu_C)
=
\tfrac{1}{2}(C_0^{-1})''(\mu_C)
\operatorname{Var}\bigl(C_0(Y)\mid x\bigr)+R_3(x).
\end{equation}
\end{theorem}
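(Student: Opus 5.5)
The plan has two parts: first identify the population optimizer of PIT-MSE, then expand $C_0^{-1}$ around $\mu_C$ and take conditional expectations.

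\emph{Optimizer.} Assume $\mathbb{E}[C_0(Y)^2\mid x]<\infty$; this holds under the standing finite-moment assumption, and automatically once clipping is used, since then $|C(Y)|\le a_\delta$. The loss $\mathcal{L}_{\mathrm{PIT}}$ decomposes pointwise in $x$ as
\[
\mathbb{E}[(f(x)-C_0(Y))^2\mid x]=(f(x)-\mu_C)^2+\operatorname{Var}(C_0(Y)\mid x).
\]
It is therefore minimized by $f^*(x)=\mu_C$, exactly as for $f_T^*$ in Section~\ref{sec:preliminaries}.

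\emph{Identity.} Write $W=C_0(Y)$. On the support where $F_Y$ is continuous and strictly increasing, $C_0$ is strictly monotone, so $Y=C_0^{-1}(W)$ almost surely. For an atom or a flat piece, use the generalized inverse; $C_0^{-1}(C_0(Y))=Y$ still holds almost surely. Hence $m(x)=\mathbb{E}[C_0^{-1}(W)\mid x]$.

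\emph{Expansion.} Taylor expand $g:=C_0^{-1}$ at $\mu_C$ with the Lagrange remainder:
\[
g(W)=g(\mu_C)+g'(\mu_C)(W-\mu_C)+\tfrac12 g''(\mu_C)(W-\mu_C)^2+\rho(W),
\]
where
\[
\rho(W)=\tfrac12\bigl(g''(\xi_W)-g''(\mu_C)\bigr)(W-\mu_C)^2
\]
for some $\xi_W$ between $W$ and $\mu_C$. Taking conditional expectations, the linear term vanishes because $\mathbb{E}[W-\mu_C\mid x]=0$. The quadratic term gives $\tfrac12 g''(\mu_C)\operatorname{Var}(W\mid x)$. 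This yields the claimed identity with
\[
R_3(x):=\mathbb{E}[\rho(W)\mid x].
\]
Rearranging gives $m(x)-C_0^{-1}(\mu_C)$.

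\emph{Main obstacle: integrability and size of the remainder.} "Locally twice-differentiable" does not control $g''$ in the tails, and the inverse normal-score map of a heavy-tailed $Y$ grows very fast: for a Pareto tail it grows like $e^{cw^2}$.

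To make $R_3$ well defined, I will take $R_3$ as defined above, $m(x)-g(\mu_C)-\tfrac12g''(\mu_C)\operatorname{Var}(W\mid x)$. This is finite whenever $m(x)<\infty$ and $\operatorname{Var}(W\mid x)<\infty$, so the identity holds as an exact decomposition.

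To justify the name $R_3$, add a local $C^3$ assumption. Then $\rho(W)=\tfrac16 g'''(\xi_W)(W-\mu_C)^3$. If $|g'''|\le K$ on a neighborhood $U$ of $\mu_C$, the contribution from $W\in U$ is bounded by $\tfrac{K}{6}\mathbb{E}[|W-\mu_C|^3\mid x]$. The contribution from $W\notin U$ is controlled by the tail integrability of $Y$ together with a Chebyshev/Markov bound. Consequently $R_3$ is of third order in the conditional spread of $W$, and the leading bias is the second-order term.

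A remark to close: for right-skewed $Y$, $g$ is convex, so the leading term is positive and direct PIT inversion underestimates $m(x)$. Combined with Theorem~1, this motivates the SUN recovery path.
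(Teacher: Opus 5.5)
Your proof is correct and follows the paper's route: the pointwise bias--variance split gives $f^*(x)=\mu_C$, the a.s.\ identity $Y=C_0^{-1}(W)$ gives $m(x)=\mathbb{E}[C_0^{-1}(W)\mid x]$, and a second-order expansion of $g=C_0^{-1}$ at $\mu_C$, with the linear term vanishing, yields the identity with $R_3$ as the residual, where the paper only adds that $R_3=o(\operatorname{Var}(W\mid x))$ when the conditional mass lies in a neighborhood on which $g''$ is continuous. Two of your side remarks overreach: making the out-of-neighborhood term $\mathbb{E}[Y\mathbf{1}\{W\notin U\}\mid x]$ third order requires H\"older with a conditional moment of $Y$ of order $p>1$ plus Markov on a moment of $W-\mu_C$ of order at least $3p/(p-1)$ (integrability of $Y$ plus Chebyshev is not enough, given the $e^{cw^2}$ growth you yourself note), and $g=F_Y^{-1}\circ\Phi$ is convex for lognormal, exponential, or Pareto-type tails but not for every right-skewed $Y$ (e.g.\ bimodal mixtures), which is why the paper states the sign of the bias only locally.
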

The local curvature $(C_0^{-1})''$ dictates the systematic error direction. In our empirical setting, this bias is further compounded by implementation constraints, which we formalize below:

\begin{corollary}[Clipped Empirical PIT Error Decomposition]
For the clipped empirical map $C_\delta$ based on $\hat{F}_Y$, the inverse relation $C_\delta^{-1}(C_\delta(Y))=Y$ holds only in the interior. The total empirical inversion error bounds are bounded by the DKW uniform error $\eta$, amplified by local Lipschitz constants of the quantile function $L_Q$, plus boundary-clipped mass $\alpha_\delta(x)$ and tied-atom mass $\tau(x)$ (see Appendix A.2 and A.4).
\end{corollary}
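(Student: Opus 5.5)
The plan is to compare the empirical clipped inversion $C_\delta^{-1}(\mu_\delta)$, where $\mu_\delta=\mathbb{E}[C_\delta(Y)\mid x]$, with the population unclipped quantity $C_0^{-1}(\mu_C)$ from Theorem~2, and to split the gap with a triangle inequality. Writing $Q=F_Y^{-1}$ and $\hat Q=\hat F_Y^{-1}$, we have
\begin{equation*}
\bigl|C_\delta^{-1}(\mu_\delta)-C_0^{-1}(\mu_C)\bigr|
\le \bigl|\hat Q(\Phi(\mu_\delta))-Q(\Phi(\mu_\delta))\bigr|
+\bigl|Q(\Phi(\mu_\delta))-Q(\Phi(\mu_C))\bigr|.
\end{equation*}
The first term is a pure table-estimation error. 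The second term is a coordinate-mismatch error, caused by clipping, ties and $\hat F_Y\neq F_Y$ changing the conditional mean of the coordinate. The total deviation from $m(x)$ is then this gap plus the Taylor bias of Theorem~2.

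The first step handles the interior inverse relation. On the event $\delta<\hat F_Y(Y^-)$ and $\hat F_Y(Y)<1-\delta$ with $Y$ not an atom of $\hat F_Y$, clipping is inactive and the generalized inverse satisfies $\hat Q(\hat F_Y(Y))=Y$. Hence $C_\delta^{-1}(C_\delta(Y))=Y$ there. Off this event, the identity can fail only on the clipped set, whose conditional mass is $\alpha_\delta(x)$, or on the tied set, whose conditional mass is $\tau(x)$. This gives the qualitative part of the corollary.

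The second step bounds the table error with DKW. With probability $1-\gamma$, $\sup_y|\hat F_Y-F_Y|\le\eta=\sqrt{\log(2/\gamma)/(2N)}$. Inverting this bound on the interval $[\delta,1-\delta]$, where $Q$ is $L_Q$-Lipschitz, gives $\sup_{u\in[\delta+\eta,1-\delta-\eta]}|\hat Q(u)-Q(u)|\le L_Q\eta$. I would assume $\eta<\delta$, so that the clipped range stays inside the region where $L_Q$ is finite.

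The third step bounds the coordinate mismatch. Split $\mathbb{E}[C_\delta(Y)-C_0(Y)\mid x]$ over three regions. On the interior, $|\Phi^{-1}(\hat F)-\Phi^{-1}(F)|\le \eta/\phi(\Phi^{-1}(1-\delta))$. On the clipped region, the difference is at most $\mathbb{E}[|C_0(Y)|\mathbf 1_{\text{clip}}\mid x]$, which I would bound by the mass $\alpha_\delta(x)$ times a Gaussian-tail constant. On the tied atoms, mid-ranks move the coordinate within the atom's quantile interval, contributing a term of order $\tau(x)$. Then $Q\circ\Phi$ is Lipschitz on $[-a_\delta,a_\delta]$, with constant $L_Q\,\phi(0)$, which converts this coordinate gap into original units.

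I expect the main obstacle to be the clipped and tied contributions. Unclipped normal scores are unbounded and $Q$ may be non-Lipschitz near the endpoints, so a clean bound there needs either bounded support of $Y$ or a moment assumption, for instance $\mathbb{E}[Y\mathbf 1_{\text{clip}}\mid x]$ finite. I would state the result as
\begin{equation*}
\bigl|C_\delta^{-1}(\mu_\delta)-C_0^{-1}(\mu_C)\bigr|\lesssim L_Q\eta+K_\delta\,\alpha_\delta(x)+K_\tau\,\tau(x).
\end{equation*}
Here $K_\delta$ and $K_\tau$ depend on $\delta$ and on the quantile gaps, and the precise constants are deferred to Appendix~A.2 and A.4.
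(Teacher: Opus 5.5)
Your route differs from the paper's, and the term that carries the weight in it, the coordinate mismatch $|Q(\Phi(\mu_\delta))-Q(\Phi(\mu_C))|$, does not deliver the stated bound.

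The paper keeps the curvature term on the clipped empirical map itself ($\kappa_\delta(x)$ and $\operatorname{Var}(C_\delta(Y)\mid x)$ in (14a)). The change of coordinate from $C_0$ to $C_\delta$ is therefore absorbed into $\mu_\delta$ and that term. Clipping, ties and $\widehat F_Y\neq F_Y$ enter only as additive remainders, each a local quantile slope times the clipped mass, the tied mass, or a quantile-level shift of at most $\eta_N$.

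You instead anchor on Theorem~2 and bound the displacement $|C_\delta^{-1}(\mu_\delta)-C_0^{-1}(\mu_C)|$. The triangle split and the table term $|\hat Q-Q|\le L_Q\eta$ are fine. Your anchoring also has a real merit: the curvature sits on a smooth population map rather than on a lookup table. But it forces the $\widehat F_Y$ error through the conditional score mean. In the interior you bound the score perturbation by $\eta/\phi(a_\delta)$ and convert back with $L_Q\phi(0)$. The DKW contribution is then $L_Q\eta\,\phi(0)/\phi(a_\delta)=L_Q\eta\,e^{a_\delta^2/2}$. That is about $10^3L_Q\eta$ at $\delta=10^{-4}$, and of order $L_Q\eta/(a_\delta\delta)$ as $\delta\to0$.

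This is not slack in your estimate. Suppose part of the mass of $Y\mid x$ sits just inside the clip boundary while $\mu_C$ lies in the bulk. Then a DKW-admissible $\widehat F_Y$ moves the score mean by order $\eta/\phi(a_\delta)$. So the gap between the two direct inversions is not $O(L_Q\eta)$ with a $\delta$-free constant. Your final display hides this factor inside $\lesssim$, and the term is uninformative unless $\eta\ll\delta$.

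Two further problems sit in the same term.
\begin{itemize}
\item Clipped part. $\mathbb{E}[|Z|\mathbf{1}\{|Z|>a\}]=2\phi(a)$ is a marginal Gaussian identity. Conditionally on $x$, the excess $\mathbb{E}[(|C_0(Y)|-a_\delta)_+\mid x]$ is not bounded by a universal constant times $\alpha_\delta(x)$. Moreover $\mu_C$ itself can leave $[-a_\delta,a_\delta]$, where your Lipschitz constant for $Q\circ\Phi$ no longer applies.
\item Ties. Theorem~2 presupposes a continuous, invertible $F_Y$; then a fresh $Y$ hits a tied value with probability zero and $\tau\equiv0$. With a genuine atom such as the zero spike, $C_0^{-1}$ is flat up to the atom's score and kinked there, so Theorem~2's smoothness hypothesis fails on the support of $C_0(Y)\mid x$. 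Hence your $K_\tau\tau$ term never does any work inside the regime where your anchor is valid.
\end{itemize}

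Finally, Step~1 has the atom condition backwards for the raw empirical CDF to which you apply DKW. Its atoms are exactly the training values. For a fresh $Y$ that is not one of them, $\hat Q(\widehat F_Y(Y))$ is the largest training value below $Y$. So the round trip fails almost surely even in the interior, by an error of at most order $L_Q\eta$; this is precisely what the paper charges as $L_Q\eta_N$.

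Conversely, with mid-ranks, tied training values $y$ do satisfy $C_\delta^{-1}(C_\delta(y))=y$ in the interior. The mid-rank level lies inside the jump interval that $\hat Q$ maps to $y$. An exact interior identity holds only for a continuous interpolated table, and its interpolation error then belongs in the $\eta$ term.
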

These local constants become extreme near zero-spikes and heavy tails, rendering any direct inversion over a bare lookup table structurally unreliable.

\begin{theorem}[Fixed-Base SUN Recovery]
Assume $\operatorname{Var}(Y\mid x)<\infty$. If $b(x)>0$ is fixed conditional on $x$, the optimal SUN branch estimator exactly recovers the Bayes target:
\begin{equation}
z^*(x)=\frac{m(x)}{b(x)}, \qquad \hat{y}^*(x)=z^*(x)b(x)=m(x).
\end{equation}
\end{theorem}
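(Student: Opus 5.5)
The plan is the standard conditional-risk argument: treat the SUN loss as a pointwise least-squares problem in $z$ for each fixed $x$, identify its minimizer as a conditional mean, and multiply by the base. Because $b$ is a fixed positive function of $x$, the proportional label $R=Y/b(X)$ is a deterministic rescaling of $Y$ given $X=x$.

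First I will check integrability. Since $\operatorname{Var}(Y\mid x)<\infty$, we have $\mathbb{E}[Y^2\mid x]<\infty$. Because $b(x)>0$ is constant given $x$, it follows that $\mathbb{E}[R^2\mid x]=\mathbb{E}[Y^2\mid x]/b(x)^2<\infty$. In particular $R$ has a finite conditional mean, and the squared loss is finite for every candidate value.

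Next I will decompose the risk. By the tower property,
\begin{equation}
\mathcal{L}_{\mathrm{SUN}}(z)=\mathbb{E}_X\Bigl[\mathbb{E}\bigl[(z(X)-R)^2\mid X\bigr]\Bigr].
\end{equation}
Minimizing over measurable $z$ therefore reduces to minimizing the inner conditional risk for (almost) every $x$. For a fixed $x$, write $\mu_R(x)=\mathbb{E}[R\mid x]$. The bias--variance identity gives
\begin{equation}
\mathbb{E}[(c-R)^2\mid x]=(c-\mu_R(x))^2+\operatorname{Var}(R\mid x).
\end{equation}
This is uniquely minimized at $c=\mu_R(x)$. By linearity of conditional expectation, $\mu_R(x)=\mathbb{E}[Y\mid x]/b(x)=m(x)/b(x)$, which gives $z^*(x)=m(x)/b(x)$. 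Multiplying by $b(x)$ yields $\hat y^*(x)=z^*(x)b(x)=m(x)$.

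The main subtlety is the meaning of ``fixed conditional on $x$''. In PIT-SUN, $b(x)$ depends on $f_\theta$, so the argument must treat $b$ as a given measurable function when $z$ is optimized. This is exactly what the stop-gradient $\operatorname{sg}[b(x)]$ implements: the SUN objective is minimized over $\psi$ with $b$ held fixed. I will state explicitly that the result concerns this partial minimization, possibly leaving joint-training effects for separate analysis. Two further remarks complete the argument. First, uniqueness holds only $P_X$-almost everywhere. Second, if $z_\psi$ uses a non-negative link, then $m(x)/b(x)\ge0$ because $Y\ge0$, so the optimizer is feasible within that class whenever the class is rich enough.
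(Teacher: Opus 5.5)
Your proposal is correct and takes essentially the same route as the paper's proof in Appendix A.3. That proof also treats $b(x)$ as an $x$-dependent positive constant (enforced by the stop-gradient), minimizes the conditional squared risk pointwise to obtain $\mathbb{E}[Y/b(x)\mid x]=m(x)/b(x)$, and multiplies back by $b(x)$; your integrability check, bias--variance uniqueness argument, and almost-everywhere caveat are refinements of this argument, not a different method.
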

In joint neural optimization, the stop-gradient operator enforcing $r=y/\operatorname{sg}[b(x)]$ preserves this fixed-base property for the recovery loss.

\begin{proposition}[Finite-Sample Conditioning via $b_{\min}$]
With the inverse-quantile base defined as $b(x)=\max(C^{-1}(\bar f(x)),b_{\min})+\varepsilon$, the recovery label $R_b=Y/b(x)$ satisfies:
\begin{equation}
\operatorname{Var}(R_b\mid x)
\le
\frac{\operatorname{Var}(Y\mid x)}{(b_{\min}+\varepsilon)^2}.
\end{equation}
Furthermore, output clipping enforces $|f(x)|\le a_f$ and $|C_\delta(Y)|\le a_\delta$, bounding each PIT-MSE sample gradient uniformly.
\end{proposition}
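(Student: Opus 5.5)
The variance bound follows from the fact that, conditional on $x$, the base $b(x)$ is a deterministic quantity. It depends only on $f_\theta(x)$ through the clip $\bar f(x)$, the fixed empirical quantile table $C^{-1}$, and the constants $b_{\min},\varepsilon$. This is the same fixed-base setting used in Theorem~3, and the stop-gradient keeps $b(x)$ fixed for the recovery loss. So we can pull the base out of the conditional variance:
\begin{equation}
\operatorname{Var}(R_b\mid x)=\operatorname{Var}\!\left(\frac{Y}{b(x)}\,\middle|\,x\right)=\frac{\operatorname{Var}(Y\mid x)}{b(x)^2}.
\end{equation}
It then remains to lower-bound $b(x)$. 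Since $\max(u,b_{\min})\ge b_{\min}$ for every $u$, we get $b(x)\ge b_{\min}+\varepsilon>0$. Here $b_{\min}\ge 0$, being a quantile of the positive targets, and $\varepsilon>0$. Squaring preserves the inequality on nonnegative numbers, and substituting gives the stated bound. We assume $\operatorname{Var}(Y\mid x)<\infty$ as in Theorem~3; otherwise the bound holds trivially as $\infty\le\infty$.

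For the gradient claim, the per-sample PIT-MSE loss is $\ell=(f-C_\delta(y))^2$, with gradient with respect to $\theta$ equal to $2(f-C_\delta(y))\nabla_\theta f$. The clipped PIT gives $|C_\delta(y)|\le a_\delta=\Phi^{-1}(1-\delta)$ directly: $\tilde F_Y\in[\delta,1-\delta]$, $\Phi^{-1}$ is monotone, and $\Phi^{-1}(\delta)=-a_\delta$ by symmetry. With the output constraint $|f(x)|\le a_f$, the residual satisfies $|f-C_\delta(y)|\le a_f+a_\delta$. The gradient with respect to the output is therefore bounded by $2(a_f+a_\delta)$ uniformly over samples, independently of the scale of $y$.

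The main subtlety is the meaning of \emph{bounding each sample gradient}. The residual factor is uniformly bounded, but the full parameter gradient also carries $\nabla_\theta f$. We would state the result for the gradient with respect to the network output, or else assume a bounded Jacobian. A second point needs care if the output clip is implemented as a hard clip: its derivative is zero outside $[-a_f,a_f]$ and one inside. The bounded residual still multiplies a factor in $[0,1]$, so the bound is unchanged. With that clarification, both parts are immediate.
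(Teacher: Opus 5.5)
Your proposal is correct and follows essentially the same route as the paper's argument in Appendix A.4: treat $b(x)$ as fixed given $x$ to get $\operatorname{Var}(R_b\mid x)=\operatorname{Var}(Y\mid x)/b(x)^2$, then use the floor $b(x)\ge b_{\min}+\varepsilon$. For the gradient part, the paper makes exactly the bounded-Jacobian assumption you flag, $\|\nabla_\theta f_\theta(x)\|\le G_f$, which yields $\|\nabla_\theta \ell_{\mathrm{PIT}}(x,y)\|\le 2(a_f+a_\delta)G_f$.
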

This demonstrates that $b_{\min}$ and clipping are not arbitrary heuristics, but strict variance controllers preventing denominator collapse in zero-inflated regions and gradient explosion in heavy tails.

\section{Experiments}
\label{sec:experiments}

\subsection{Experimental Setup}
\noindent\textbf{Datasets.} We evaluate PIT-SUN on 12 synthetic datasets, two public benchmarks (CIKM16, DTMart), one anonymous industrial benchmark (Indus), two large-scale real-world datasets (Dwell Time, Zero-inflated GMV), and an online incentive A/B test. Offline CDF and quantile tables are built only from training labels or validation-selected training windows; test labels are never used for lookup construction. Appendices C--D provide setup details and supplements.

\noindent\textbf{Baselines.} We compare with MSE, T-MSE (log, sqrt, $y^2$), manual-transform recovery methods (TranSUN/GTS), and specialized long-tailed or zero-inflated models (ZILN, WLR, MDME, TPM, CREAD, EGMN, GR, CCOR-Net). 

\noindent\textbf{Metrics.} We report point errors (NMAE, NRMSE), aggregate calibration (SRE, TRE, MRE, PGR), and ranking quality (xAUC, Normalized Gini, Spearman's $\rho$). TRE/MRE/PGR diagnose aggregate absolute error, instance-wise relative error, and signed prediction gain, respectively; xAUC is a pairwise ordering metric for continuous high-value targets. We report multi-seed means with paired tests ($p<0.05$).

\subsection{Main Results}

\subsubsection{Robustness on Synthetic Marginals}
\label{sec:synthetic}
We use 12 synthetic supervised-regression datasets with known conditional means, constructed to cover right-skewed heavy-tail (RS-*), left-skewed (LS-*), and symmetric (SM-*) marginals, together with zero inflation and multimodality. Figure~\ref{fig:synthetic_radar} gives a compact view of calibration robustness, while Appendix D reports the complete SRE values and multi-seed summaries. PIT-SUN shows the most stable profile without per-dataset transform selection, and remains consistently reliable as the target shape changes across tasks and noise families of very different forms.

\subsubsection{Generalization on Public Benchmarks}
Table~\ref{tab:public} evaluates whether PIT-SUN generalizes beyond the industrial setting on two public domains: D1 is long-tailed dwell time (CIKM16), and D2 is sparse transaction amount (DTMart). Dataset descriptions, train/test splits, and metric definitions are provided in Appendix C. Across point errors and ranking quality, PIT-SUN improves over distributional models (ZILN, EGMN), discretized/structured models (TPM, CCOR-Net), manual recovery baselines (TranSUN/GTS), and PIT plug-in variants (PIT-only and PIT-TranSUN). This indicates that the improvement is not only from empirical PIT, but from coupling the PIT coordinate with stabilized recovery.

\begin{figure}[t]
    \centering
    \includegraphics[width=0.85\columnwidth]{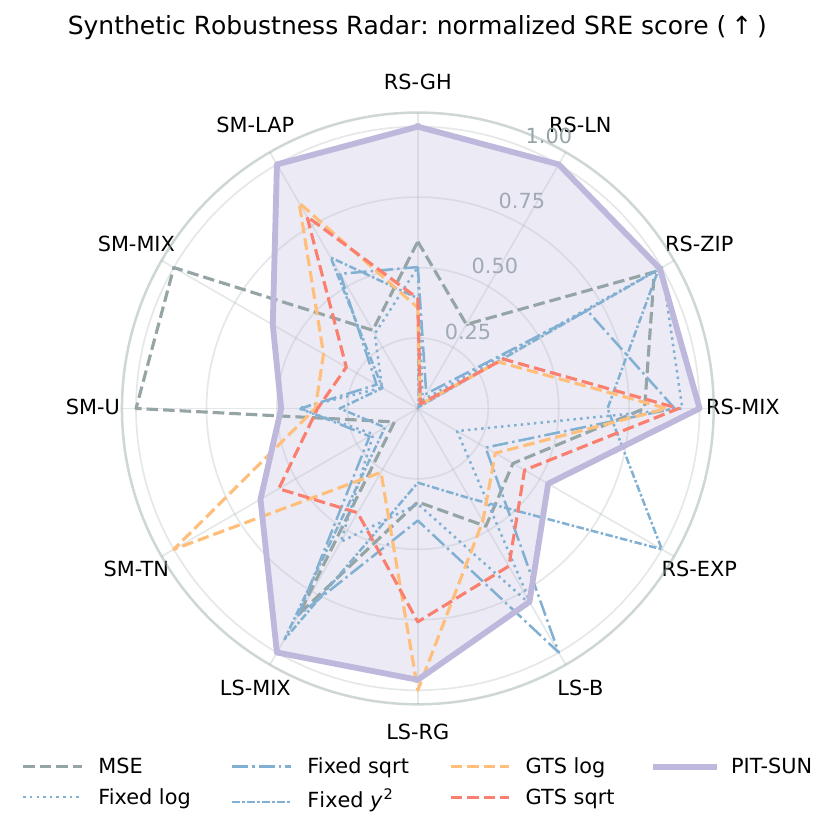}
    \caption{Synthetic robustness under normalized SRE. Larger values indicate better calibration.}
    \label{fig:synthetic_radar}
    \medskip
    \includegraphics[width=0.95\columnwidth]{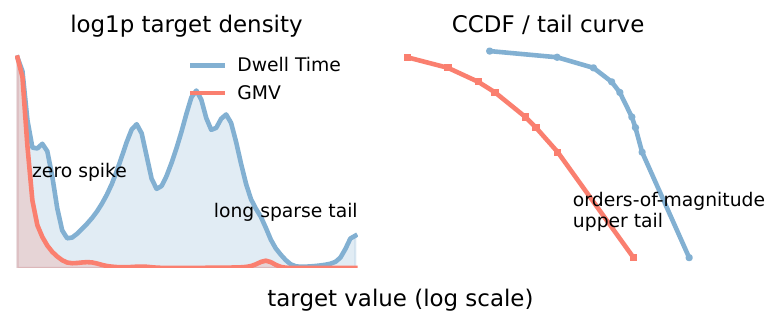}
    \caption{Anonymized industrial target diagnostics.}
    \label{fig:industrial_dist}
\end{figure}

\begin{table}[h]
\centering
{\fontsize{9pt}{10pt}\selectfont\setlength{\tabcolsep}{3pt}
\begin{tabular}{lcccccc}
\toprule
\multirow{2}{*}{Model} & \multicolumn{2}{c}{NRMSE $\downarrow$} & \multicolumn{2}{c}{NMAE $\downarrow$} & \multicolumn{2}{c}{xAUC $\uparrow$} \\
\cmidrule(lr){2-3} \cmidrule(lr){4-5} \cmidrule(lr){6-7}
 & D1 & D2 & D1 & D2 & D1 & D2 \\
\midrule
ZILN & 0.647 & 0.515 & 0.469 & \underline{0.221} & 0.681 & 0.935 \\
WLR & 0.872 & 1.019 & 0.614 & 0.579 & 0.681 & 0.932 \\
MDME & 1.580 & 0.973 & 1.148 & 0.346 & 0.678 & 0.936 \\
TPM & 0.571 & 0.509 & 0.441 & 0.223 & 0.683 & 0.935 \\
CREAD & 0.568 & 0.443 & 0.438 & 0.265 & 0.683 & 0.930 \\
EGMN & 0.595 & 0.493 & 0.474 & 0.301 & 0.687 & 0.916 \\
TranSUN & \underline{0.557} & 0.449 & \underline{0.437} & 0.225 & 0.678 &0.936 \\
GR & 0.601 & 0.452 & 0.482 & 0.265 & \underline{0.688} & 0.927 \\
CCOR-Net & 0.574 & \underline{0.426} & 0.449 & 0.275 & 0.680 & 0.930 \\
PIT-only & 0.612 & 0.461 & 0.479 & 0.244 & 0.672 & 0.922 \\
PIT-TranSUN & 0.563 & 0.436 & 0.447 & 0.238 & 0.682 & \underline{0.937} \\
\midrule
\textbf{PIT-SUN} & \textbf{0.547} & \textbf{0.412} & \textbf{0.424} & \textbf{0.207} & \textbf{0.695} & \textbf{0.943} \\
\bottomrule
\end{tabular}
}
\caption{Public benchmark results ($p<0.05$). D1 is CIKM16 and D2 is DTMart. Best results are in bold, second-best are underlined.}
\label{tab:public}
\end{table}

\subsubsection{Large-Scale Industrial Deployments}
Before presenting results, we link the industrial domains to the regression dilemma. Figure~\ref{fig:industrial_dist} visualizes the anonymized marginal diagnostics used for this motivation: Dwell Time is long-tailed, while GMV is sparser and more tail-concentrated. These regimes stress original-space MSE and fixed transforms in different ways, motivating an empirical recovery design that is tested below at deployment scale.

\begin{table*}[h]
\centering
{\fontsize{9pt}{10pt}\selectfont\setlength{\tabcolsep}{3pt}
\begin{tabular}{lcccccccc}
\toprule
Model & NMAE $\downarrow$ & NRMSE $\downarrow$ & TRE $\downarrow$ & MRE $\downarrow$ & PGR $\to 0$ & xAUC $\uparrow$ & Norm\_Gini $\uparrow$ & Spear$\rho$ $\uparrow$ \\
\midrule
T-MSE(ln) & 0.696 & 1.343 & 0.338 & 2.351 & -0.193 & 0.827 & 0.823 & 0.783 \\
T-MSE(sqrt) & 0.682 & 1.292 & 0.324 & 2.356 & \underline{-0.090} & 0.820 & 0.822 & 0.778 \\
TranSUN(ln) & 0.575 & 1.262 & 0.183 & 1.385 & 0.183 & 0.882 & 0.835 & 0.793 \\
TranSUN(sqrt) & 0.622 & 1.158 & 0.207 & 1.453 & 0.207 & 0.894 & 0.831 & 0.798 \\
GTS(ln) & 0.570 & 1.060 & 0.193 & 1.538 & 0.109 & 0.867 & 0.856 & 0.821 \\
GTS(sqrt) & 0.504 & 0.988 & 0.149 & 1.253 & 0.099 & 0.878 & 0.858 & 0.823 \\
ZILN & 1.094 & 1.522 & 0.369 & 1.727 & 0.417 & 0.842 & 0.859 & 0.784 \\
WLR & 0.832 & 1.394 & 0.224 & 1.206 & 0.353 & 0.824 & 0.801 & 0.807 \\
TPM & 0.641 & 1.233 & 0.265 & 1.808 & -0.140 & 0.864 & 0.853 & 0.812 \\
CREAD & 0.531 & \underline{0.926} & 0.097 & \underline{0.878} & 0.097 & 0.904 & \underline{0.865} & \underline{0.843} \\
EGMN & 0.633 & 1.125 & 0.229 & 1.034 & -0.275 & 0.875 & 0.828 & 0.815 \\
GR & 0.688 & 1.311 & 0.177 & 1.288 & -0.269 & 0.896 & 0.812 & 0.801 \\
CCOR-Net & \underline{0.487} & 0.927 & 0.166 & 1.198 & -0.166 & 0.893 & 0.838 & 0.837 \\
PIT-TranSUN & 0.801 & 1.086 & \underline{0.083} & 1.030 & -0.168 & \underline{0.910} & 0.837 & 0.808 \\
\midrule
\textbf{PIT-SUN} & \textbf{0.477} & \textbf{0.885} & \textbf{0.051} & \textbf{0.818} & \textbf{0.054} & \textbf{0.921} & \textbf{0.878} & \textbf{0.852} \\
\midrule
\rowcolor{gray!10} PIT-SUN-w/o SUN & 0.496 & 0.905 & 0.054 & 1.025 & -0.062 & 0.918 & 0.872 & 0.835 \\
\rowcolor{gray!10} PIT-SUN-w/o $\mathit{sg}$ & 0.524 & 0.908 & 0.060 & 0.942 & 0.099 & 0.917 & 0.843 & 0.817 \\
\rowcolor{gray!10} PIT-SUN w/ rand-rank & 0.545 & 0.972 & 0.062 & 0.997 & 0.060 & 0.913 & 0.852 & 0.810 \\
\bottomrule
\end{tabular}
}
\caption{Large-scale Industrial Dwell Time Forecasting ($p<0.05$). PIT-TranSUN is a strong plug-in baseline that replaces the manual transform in TranSUN/GTS with empirical PIT and uses $C^{-1}(\bar f(x))+\varepsilon$ as the recovery base without the $b_{\min}$ lower-bound stabilization. Shaded rows denote diagnostic variants.}
\label{tab:dwell}
\end{table*}
Tables~\ref{tab:dwell} and \ref{tab:gmv} report the offline results. PIT-TranSUN is included as a plug-in test: it substitutes empirical PIT into the TranSUN template while leaving the recovery base as a bare inverse lookup. Its weaker point-error and sparse-value behavior indicate that the gain comes from the full PIT-SUN closure---bounded coordinate, stabilized inverse-quantile base, and conditionally linear recovery---rather than from PIT alone. Overall, PIT-SUN improves point accuracy, aggregate calibration, and ranking while avoiding the signed PGR biases of fixed transforms and the manual-base dependence of recovery baselines.

GMV further separates occurrence modeling from value estimation. We report Zero-AUC and positive recall in Table~\ref{tab:gmv}: PIT-SUN remains the strongest end-to-end value estimator, while PIT-SUN-ZI improves occurrence diagnostics and zero-positive decoupling. PIT-SUN-ZI is preferred when zero-occurrence monitoring is required, whereas PIT-SUN is the default when value estimation and calibration dominate. Bucket-level, zero-positive, sensitivity, and drift diagnostics are deferred to Appendix D.

\begin{table*}[h]
\centering
{\fontsize{9pt}{10pt}\selectfont\setlength{\tabcolsep}{3pt}
\begin{tabular}{lcccccccccc}
\toprule
Model & NMAE $\downarrow$ & NRMSE $\downarrow$ & TRE $\downarrow$ & MRE $\downarrow$ & PGR $\to 0$ & xAUC $\uparrow$ & Norm\_Gini $\uparrow$ & Spear$\rho$ $\uparrow$ & Zero-AUC $\uparrow$ & Pos. Recall $\uparrow$ \\
\midrule
T-MSE(ln) & 1.249 & 6.541 & 0.544 & 2.152 & 0.544 & 0.829 & 0.846 & 0.775 & 0.812 & 0.608 \\
T-MSE(sqrt) & 1.361 & 5.059 & 0.428 & 2.319 & -0.228 & 0.818 & 0.844 & 0.754 & 0.819 & 0.614 \\
TranSUN(ln) & 0.748 & 5.091 & 0.429 & 1.914 & 0.096 & 0.877 & 0.860 & 0.760 & 0.851 & 0.632 \\
TranSUN(sqrt) & 0.744 & 5.613 & 0.271 & 2.087 & -0.063 & 0.865 & 0.886 & 0.775 & 0.858 & 0.641 \\
GTS(ln) & 0.735 & 4.185 & 0.328 & 1.411 & \underline{-0.043} & 0.862 & 0.875 & 0.772 & 0.862 & 0.646 \\
GTS(sqrt) & 0.719 & 4.431 & 0.172 & 1.387 & -0.072 & 0.871 & 0.873 & 0.768 & 0.866 & 0.652 \\
ZILN & 1.537 & 6.194 & 0.571 & 1.441 & 0.683 & 0.509 & 0.223 & 0.782 & 0.801 & 0.587 \\
WLR & 1.479 & 6.313 & 0.615 & 2.350 & 1.837 & 0.836 & 0.846 & 0.645 & 0.836 & 0.604 \\
MDME & 0.782 & 5.240 & 0.374 & 1.645 & 0.096 & 0.868 & 0.868 & 0.766 & 0.861 & 0.638 \\
TPM & 0.767 & 5.643 & 0.427 & 1.888 & 0.473 & 0.873 & 0.869 & 0.771 & 0.865 & 0.644 \\
CREAD & \underline{0.646} & 3.988 & 0.185 & 1.324 & -0.174 & 0.893 & 0.859 & 0.788 & 0.871 & 0.641 \\
EGMN & 0.716 & 4.693 & 0.311 & 1.584 & -0.154 & 0.854 & 0.873 & 0.731 & 0.858 & 0.633 \\
GR & 0.733 & 4.465 & 0.354 & 2.049 & -0.357 & 0.898 & 0.874 & 0.763 & 0.863 & 0.649 \\
CCOR-Net & 0.658 & 4.830 & 0.192 & 1.879 & -0.192 & 0.861 & 0.871 & 0.770 & 0.867 & 0.655 \\
PIT-TranSUN & 1.009 & 4.331 & 0.298 & 1.623 & -0.095 & 0.872 & 0.860 & 0.772 & 0.861 & 0.626 \\
\midrule
\textbf{PIT-SUN} & \textbf{0.623} & \textbf{3.853} & \textbf{0.071} & \textbf{1.143} & \textbf{0.027} & \textbf{0.909} & \textbf{0.902} & \textbf{0.808} & \underline{0.884} & \underline{0.663} \\
\textbf{PIT-SUN-ZI} & 0.719 & \underline{3.973} & \underline{0.098} & \underline{1.279} & -0.098 & \underline{0.906} & \underline{0.898} & \underline{0.807} & \textbf{0.902} & \textbf{0.681} \\
\midrule
\rowcolor{gray!10} PIT-SUN-w/o SUN & 0.688 & 4.118 & 0.347 & 1.673 & -0.120 & 0.884 & 0.875 & 0.785 & 0.872 & 0.651 \\
\rowcolor{gray!10} ZI w/o Amt-SUN & 0.689 & 4.272 & 0.250 & 1.886 & -0.194 & 0.849 & 0.878 & 0.781 & 0.897 & 0.676 \\
\bottomrule
\end{tabular}
}
\caption{Large-scale Industrial Zero-inflated GMV Forecasting ($p<0.05$). PIT-SUN-ZI denotes a zero-inflated extension for sparse value domains; PIT-TranSUN is the empirical-PIT plug-in recovery baseline using the same inverse lookup but without the $b_{\min}$ floor; shaded rows denote diagnostic variants.}
\label{tab:gmv}
\end{table*}
\begin{table*}[h]
\centering
{\fontsize{9pt}{10pt}\selectfont\setlength{\tabcolsep}{3pt}
\begin{tabular}{lccccc}
\toprule
Strategy (Reward) & DAU & Watching Time & Video View Cnt & Ad Revenue & Deeply Engaged Users \\
\midrule
Single-Obj (Time) & 0.010\% & 0.318\%$^*$ & 0.265\%$^*$ & 0.148\% & 0.195\%$^*$ \\
\textcolor{gray}{95\% CI} & \textcolor{gray}{[-0.01\%,0.03\%]} & \textcolor{gray}{[0.28\%,0.36\%]} & \textcolor{gray}{[0.20\%,0.33\%]} & \textcolor{gray}{[-0.06\%,0.36\%]} & \textcolor{gray}{[0.16\%,0.23\%]} \\
Multi-Obj (Time, Revenue, etc.) & 0.003\% & 0.330\%$^*$ & 0.273\%$^*$ & 0.216\% & 0.064\%$^*$ \\
\textcolor{gray}{95\% CI} & \textcolor{gray}{[-0.03\%,0.04\%]} & \textcolor{gray}{[0.26\%,0.40\%]} & \textcolor{gray}{[0.16\%,0.39\%]} & \textcolor{gray}{[-0.08\%,0.52\%]} & \textcolor{gray}{[0.03\%,0.10\%]} \\
\bottomrule
\end{tabular}
}
\caption{7-day average relative lift in online A/B testing. $^*$ indicates $p<0.05$ via user-level bootstrap-based A/B test. Metrics without $^*$ have confidence intervals that include zero and are interpreted as stable or directional rather than significant gains.}
\label{tab:online_ab}
\end{table*}

\subsection{Component Diagnostics}
The main tables establish the empirical comparison; this subsection checks whether the proposed components play the roles required by the method design. We use diagnostic variants to test four links in the PIT-SUN chain: a bounded coordinate alone is insufficient for expectation recovery; a PIT plug-in base still needs denominator stabilization; joint training needs gradient isolation to preserve branch roles; and zero-positive factorization is useful only when occurrence monitoring is part of the deployment objective.

\noindent\textbf{Coordinate without recovery.} PIT-SUN-w/o SUN keeps the empirical PIT coordinate but removes the conditionally linear recovery path. This variant is a diagnostic for the PIT-only bias result: bounded labels help representation learning, but direct inverse-style prediction does not by itself define the original-space conditional mean.

\noindent\textbf{PIT plug-in without base stabilization.} PIT-TranSUN sets $T(y)=C(y)$ in the TranSUN/GTS template and uses $C^{-1}(\bar f(x))+\varepsilon$ as the base. This isolates the question of whether empirical PIT can simply replace a manual transform. Its behavior shows why PIT-SUN treats the CDF table as a full closure object: coordinate, inverse lookup, lower-bounded base, and recovery must be specified together.

\noindent\textbf{Gradient isolation.} PIT-SUN-w/o $\mathit{sg}$ allows recovery residuals to update the inverse-quantile path. This diagnostic tests the fixed-base interpretation of SUN recovery rather than adding another baseline: PIT should shape the bounded coordinate, while SUN should fit the expectation ratio under an isolated base.

\noindent\textbf{Zero-positive factorization.} PIT-SUN-ZI is included to clarify the boundary of the single-track estimator. It is useful when zero-occurrence diagnostics are operationally important, whereas the positive-amount branch still requires SUN recovery for value calibration, as shown by ZI w/o Amt-SUN.

\subsection{Large-Scale Online A/B Testing}
To evaluate end-to-end business value, we deployed PIT-SUN in an online incentive allocation system. The model estimates complex targets to guide budget-constrained distribution strategies. Online streaming data is long-tailed, sparse, and noisy, requiring stable regression.

\begin{figure}[h]
    \centering
    \includegraphics[width=0.5\columnwidth]{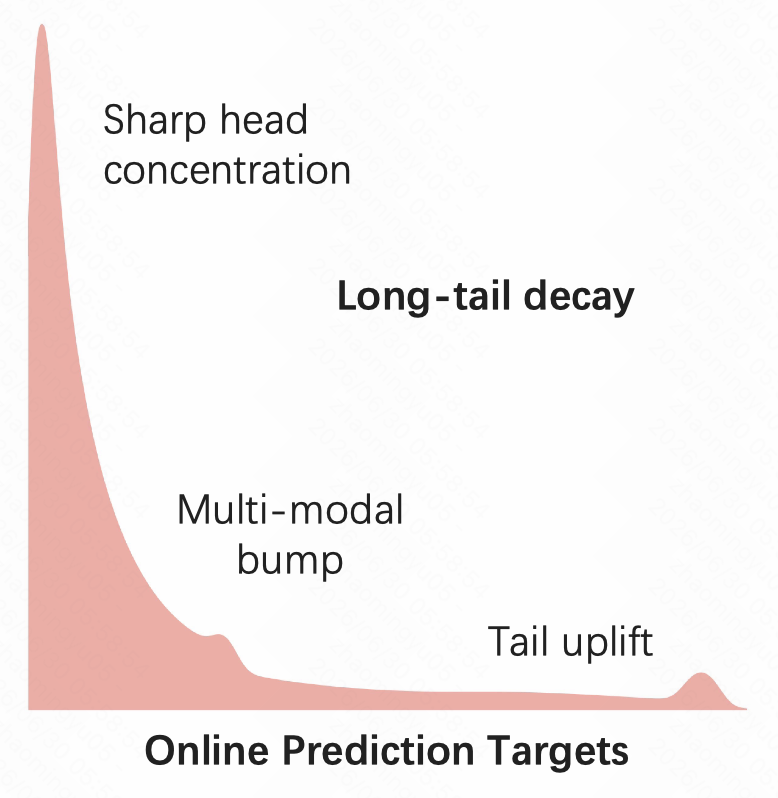}
    \caption{Online target distribution in the deployed incentive allocation system.}
    \label{fig:online_dist}
\end{figure}

Table~\ref{tab:online_ab} reports average relative lifts over a continuous 7-day online A/B test, with significance from user-level bootstrap testing. We deployed two PIT-SUN-guided reward configurations: \textbf{Dwell Time} and a \textbf{Multi-Objective Mixture} (dwell time, engagement depth, and commercial value). Due to compliance constraints, Table~\ref{tab:online_ab} reports relative lifts and confidence intervals rather than raw traffic counts; Appendix C.4 describes the randomized bucketization, fixed production controls, and leakage checks used to isolate the estimator change. The results show statistically significant gains in dwell time, video view counts, and deeply engaged users. DAU and ad revenue
remains statistically stable. We therefore interpret the online evidence as significant engagement improvement with neutral guardrail movement and directionally positive commercial signals, rather than as a statistically confirmed ad-revenue gain.

\begin{table}[h]
\centering
{\fontsize{9pt}{10pt}\selectfont\setlength{\tabcolsep}{3pt}
\begin{tabular}{lccc}
\toprule
Model & Training Time & Inf. P50 & Inf. P99 \\
\midrule
T-MSE(with Expect-aligned) & 40min 04s & 17.93 ms & 23.67 ms \\
Two-stage & 55min 17s & 20.01 ms & 25.17 ms \\
PIT-SUN (Multi-obj) & 36min 24s & 14.61 ms & 18.35 ms \\
PIT-SUN (Single-obj)& 31min 33s & 15.62 ms & 20.05 ms \\
\bottomrule
\end{tabular}
}
\caption{Online training and inference overhead comparison.}
\label{tab:overhead}
\end{table}

Table~\ref{tab:overhead} profiles deployment overhead. PIT-SUN shortens training time and reduces inference latency versus existing baselines, supporting high-QPS deployment.

\section{Conclusion}

PIT-SUN provides a data-driven closure for the stability--expectation dilemma in recommender-system regression. This positioning reframes target transformation from heuristic preprocessing into a deployable empirical-marginal closure for value-oriented recommendation. Across synthetic, public, industrial offline, and online settings, PIT-SUN shows robust calibration and ranking behavior under complex marginals, with stable advantages across both sparse and heavy-tailed regimes. The appendix further supports the design with PIT-TranSUN plug-in comparisons, recovery-base diagnostics, CDF drift analysis, bucket-level calibration, and subgroup-heterogeneity diagnostics, clarifying why the complete empirical-marginal closure is necessary: PIT alone lacks expectation recovery, bare inverse lookup leaves recovery labels ill-conditioned, and zero-positive decoupling still requires SUN for calibrated positive amounts. Beyond individual regression tasks, PIT-SUN also suggests a practical direction for multi-task learning: by mapping heterogeneous continuous targets such as dwell time, GMV, and LTV into comparable normal-score coordinates before expectation recovery, it can serve as an interpretable and model-agnostic scale-aligner for industrial recommendation systems.

\bibliography{aaai2027}


\section{Appendix A: Proofs for the Theoretical Justification}
\label{app:proofs}
Appendix A provides the mathematical details supporting the Theoretical Justification section. The proofs follow the same design chain as the main text: nonlinear direct inversion cannot be both stabilizing and universally expectation-consistent; PIT creates a bounded learning coordinate but direct PIT inversion remains biased; SUN recovery restores the original-space expectation under a fixed positive base; and clipped PIT together with the lower-bounded inverse-quantile base improves the conditioning of the main and recovery subproblems. The results are population or local finite-sample conditioning statements, not distribution-free deployment guarantees under arbitrary drift. Empirical CDF error, boundary clipping, ties, and refresh policies are handled by generalized inverses, explicit clipping constants, local quantile regularity, and the diagnostics in Appendix D. The comparison table then clarifies how PIT-SUN differs from transformed MSE and TranSUN/GTS.

\subsection{A.1 Direct Inversion Requires an Affine Inverse}
This subsection proves the structural limitation behind target transformation. Let $T$ be strictly monotone on an interval $\mathcal{I}$ containing the support of the conditional target, and let $g=T^{-1}$ be measurable, locally bounded, and integrable under the transformed distributions considered below. Suppose direct inversion were universally expectation-consistent:
\begin{equation}
g\bigl(\mathbb{E}[T(Y)\mid x]\bigr)=\mathbb{E}[Y\mid x]
\end{equation}
for every conditional distribution of $Y\mid x$ with finite mean. Since $Y=g(W)$ for $W=T(Y)$, this condition is equivalent to
\begin{equation}
g\bigl(\mathbb{E}[W\mid x]\bigr)=\mathbb{E}[g(W)\mid x]
\end{equation}
for every distribution of $W\mid x$ supported on the transformed domain with finite first moment and finite $\mathbb{E}|g(W)|$. Consider any two points $w_1,w_2$ in a convex sub-interval of this domain and a Bernoulli mixture $W=w_1$ with probability $p$ and $W=w_2$ with probability $1-p$. The condition gives
\begin{equation}
g(pw_1+(1-p)w_2)=p g(w_1)+(1-p)g(w_2)
\end{equation}
for all $p\in[0,1]$. Thus $g$ preserves every chord and is Jensen-affine on the interval. A measurable or locally bounded Jensen-affine function is affine; monotone inverse transforms used in regression satisfy this regularity automatically on their effective support. The statement applies to log/sqrt/Box-Cox after restricting to their non-negative support and to PIT-type maps after using generalized inverse intervals for atoms. Consequently, any nonlinear inverse transform that compresses heavy tails must fail for some conditional distribution. On a fixed empirical task, however, the relevant diagnostic is finite: compare $\tilde y_{\mathrm{PIT}}=C^{-1}(f(x))$ against original-space labels through bias, SRE/TRE, and bucket calibration, as done for PIT-only versus PIT-SUN in Appendix D.6. The familiar retransformation bias is therefore not a removable artifact of log, sqrt, or PIT; it is the reason a separate conditionally linear recovery path is required.

\subsection{A.2 PIT-only Inversion Is Biased}
The PIT-only bias statement is stated for the ideal unclipped, continuous, invertible PIT map. Clipping is used in implementation for finite-sample conditioning and adds a boundary truncation error controlled by $\delta$. The PIT branch minimizes
\begin{equation}
\mathcal{L}_{\mathrm{PIT}}
=\mathbb{E}_{(x,y)\sim\mathcal{D}}
\bigl[(f(x;\theta)-C(y))^2\bigr].
\end{equation}
For each fixed $x$, the population minimizer satisfies
\begin{equation}
f^*(x)
=\arg\min_u\mathbb{E}\bigl[(u-C(Y))^2\mid X=x\bigr]
=\mathbb{E}[C(Y)\mid x].
\end{equation}
If one directly inverts this prediction, the estimator becomes
\begin{equation}
\tilde{y}_{\mathrm{PIT}}(x)
=C^{-1}\bigl(\mathbb{E}[C(Y)\mid x]\bigr),
\end{equation}
which generally differs from
\begin{equation}
\mathbb{E}[C^{-1}(C(Y))\mid x]=m(x).
\end{equation}
Let $W=C(Y)$, $\mu_C(x)=\mathbb{E}[W\mid x]$, and $g=C^{-1}$. A second-order Taylor expansion around $\mu_C(x)$ gives
\begin{equation}
m(x)=g(\mu_C(x))
+\tfrac{1}{2}g''(\mu_C(x))\operatorname{Var}(W\mid x)+R_3(x),
\end{equation}
where $R_3(x)=o(\operatorname{Var}(W\mid x))$ when the conditional mass is concentrated in a neighborhood where $g''$ is continuous; the sign of $g''$ determines whether direct PIT inversion under- or over-estimates locally.
Thus PIT alone provides a stable coordinate but not an unbiased original-space expectation estimator. For the clipped empirical map $C_\delta$, $C_\delta^{-1}(C_\delta(Y))=Y$ holds only away from clipped boundaries and flat atoms. The following corollary makes the implementation-level terms explicit.

\begin{corollary}[Clipped Empirical PIT Error Decomposition]
Let $C_\delta$ be the clipped empirical PIT map implemented with a generalized inverse, and define
\[
\mu_\delta(x)=\mathbb{E}[C_\delta(Y)\mid x],\qquad
\tilde y_{\mathrm{PIT}}(x)=C_\delta^{-1}(\mu_\delta(x)).
\]
On the interior non-atomic neighborhood of $\mu_\delta(x)$, let $\kappa_\delta(x)=(C_\delta^{-1})''(\mu_\delta(x))$, and let $L_Q(x)$ be a local upper bound on the inverse-quantile slope. Let $\alpha_\delta(x)$ be the conditional mass clipped to the two PIT boundaries, $\tau(x)$ the conditional mass assigned to tied atoms, and $\eta_N=\sup_y|\widehat F_Y(y)-F_Y(y)|$ the empirical CDF error. Then the clipped empirical direct-inversion gap satisfies the local bound
\begin{equation}
\begin{aligned}
&\left|
m(x)-\tilde y_{\mathrm{PIT}}(x)
-\tfrac{1}{2}\kappa_\delta(x)
\operatorname{Var}(C_\delta(Y)\mid x)
\right| \\
&\qquad\le
|R_{3,\delta}(x)|
+L_Q(x)\{\alpha_\delta(x)+\tau(x)+\eta_N\}.
\end{aligned}
\tag{14a}
\end{equation}
\end{corollary}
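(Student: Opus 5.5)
The plan is to decompose the conditional mean $m(x)=\mathbb{E}[Y\mid x]$ by writing $Y$ as its clipped empirical reconstruction $C_\delta^{-1}(C_\delta(Y))$ plus a reconstruction residual. Set $W_\delta=C_\delta(Y)$ and $g_\delta=C_\delta^{-1}$, the generalized inverse. Then
\[
m(x)-\tilde y_{\mathrm{PIT}}(x)
=\underbrace{\mathbb{E}[g_\delta(W_\delta)\mid x]-g_\delta(\mu_\delta(x))}_{\text{Jensen/Taylor gap}}
+\underbrace{\mathbb{E}[Y-g_\delta(C_\delta(Y))\mid x]}_{\text{reconstruction error}}.
\]
The first bracket is handled exactly as in the PIT-only bias argument of A.2, now applied to $g_\delta$ instead of $C_0^{-1}$. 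On the interior non-atomic neighbourhood of $\mu_\delta(x)$, $g_\delta$ is twice differentiable. A second-order expansion around $\mu_\delta(x)$ gives $\tfrac12\kappa_\delta(x)\operatorname{Var}(W_\delta\mid x)+R_{3,\delta}(x)$. The linear term vanishes because we expand around the conditional mean. Subtracting the curvature term and applying the triangle inequality leaves $|R_{3,\delta}(x)|$ on the right-hand side of (14a).

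The second bracket is split by a partition of the sample space into three events.

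\emph{Interior non-atomic event.} Here $Y$ lies strictly inside the clipping range and is not at an atom of $\widehat F_Y$. On this event $g_\delta(C_\delta(Y))$ is the empirical quantile evaluated at $\widehat F_Y(Y)$. Comparing empirical to population quantiles, the discrepancy is at most $L_Q(x)\,|\widehat F_Y(Y)-F_Y(Y)|\le L_Q(x)\eta_N$. This uses the local inverse-quantile slope bound $L_Q$ as a Lipschitz constant on probability arguments.

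\emph{Boundary event.} Here $\widehat F_Y(Y)\notin[\delta,1-\delta]$, which has conditional mass $\alpha_\delta(x)$. On this event the reconstruction returns a boundary quantile instead of $Y$.

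\emph{Atom event.} Here $Y$ sits on a tied atom, which has conditional mass $\tau(x)$. On this event the mid-rank PIT maps $Y$ inside the flat quantile interval.

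On each of the last two events the error is bounded by $L_Q(x)$ times a probability-scale displacement. Integrating gives $L_Q(x)\alpha_\delta(x)$ and $L_Q(x)\tau(x)$. Summing the three contributions produces $L_Q(x)\{\alpha_\delta+\tau+\eta_N\}$, which completes (14a).

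The main obstacle is the boundary and atom terms. There the true displacement is in the $y$-scale, namely the distance from $Y$ to the boundary or atom quantile, and this is not literally ``mass times slope''. To reduce it to that form I would interpret $L_Q(x)$ as bounding the inverse-quantile variation over probability intervals of length at most one, so the per-sample error is at most $L_Q(x)$. Multiplying by the event probability then gives the stated form. This makes the bound local and possibly loose near heavy tails, consistent with the paper's remark that these constants blow up at zero-spikes and tails. If this interpretation is too generous, the fallback is to keep these terms as $\mathbb{E}[|Y-q_\delta|\mathbf{1}\{\cdot\}\mid x]$ and absorb them into $L_Q$ by definition.

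A second subtlety is that the Taylor expansion requires $W_\delta$ to concentrate in the region where $g_\delta$ is smooth. I would handle this by stating that the expansion is applied conditionally on the interior event, and that the complementary mass is already charged to $\alpha_\delta+\tau$.
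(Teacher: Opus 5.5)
Your proposal is correct at the paper's level of rigor and follows the same route as its proof sketch: a second-order expansion about $\mu_\delta(x)$ gives the curvature term and $R_{3,\delta}(x)$, and clipping, tied atoms and empirical-CDF error are each charged at the local slope $L_Q(x)$ against $\alpha_\delta(x)$, $\tau(x)$ and $\eta_N$; your split into the Jensen gap of $g_\delta$ plus the residual $\mathbb{E}[Y-g_\delta(C_\delta(Y))\mid x]$ makes the paper's implicit bookkeeping explicit, and the boundary-term looseness you flag is equally present in the paper. Two small refinements: first, the conditioning-on-the-interior patch is unnecessary (it would also move the centring off $\mu_\delta(x)$, so the linear term would no longer vanish), since $R_{3,\delta}(x)$ is simply the remainder of the first bracket; second, on the interior event the residual $Y-\widehat F_Y^{-1}(\widehat F_Y(Y))=Y-s$, with $s$ the sample point just below $Y$, is bounded by $L_Q(x)\{F_Y(Y)-F_Y(s)\}\le 2L_Q(x)\eta_N$ because $\widehat F_Y(s)=\widehat F_Y(Y)$, not by $L_Q(x)\,|\widehat F_Y(Y)-F_Y(Y)|$ (and it vanishes outright if the table inverse is exact in the interior, as the paper asserts).
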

The bound shows that boundary samples contribute through the clipped mass $\alpha_\delta(x)$, tied atoms contribute through the atom mass $\tau(x)$ and generalized-inverse intervals, and empirical-table estimation contributes through the CDF error $\eta_N$. These effects do not invalidate SUN recovery, because the final estimator is not the direct inverse $C_\delta^{-1}(f(x))$; they delimit where the second-order bias approximation is informative. This is the formal reason the SUN recovery branch is necessary.

\noindent\textit{Proof sketch for Corollary 1.}
The ideal expansion gives the curvature term in Eq.~(14). Clipping replaces tail PIT values by boundary constants, so the affected conditional mass contributes at most a local inverse-quantile slope times the clipped mass, yielding $L_Q(x)\alpha_\delta(x)$. Tied atoms replace the smooth inverse by a generalized-inverse interval; the corresponding interval displacement is bounded by the same local slope times the tied mass, yielding $L_Q(x)\tau(x)$. Finally, replacing $F_Y$ by $\widehat F_Y$ perturbs quantile levels by at most $\eta_N$, which gives $L_Q(x)\eta_N$ away from the active floor and boundary. Combining these terms with the local Taylor remainder gives Eq.~(14a).

\subsection{A.3 SUN Recovery Restores Expectation Consistency}
The recovery base is
\begin{equation}
\label{eq:app_base}
b(x)=\max\bigl(C^{-1}(\operatorname{clip}(f(x;\theta),-a_\delta,a_\delta)),
 b_{\min}\bigr)+\varepsilon .
\end{equation}
Under the fixed-base subproblem, stop-gradient makes $b(x)$ an $x$-dependent positive constant. The recovery objective is therefore minimized by
\begin{align}
z^*(x)
&=\arg\min_v\mathbb{E}\left[
\left(v-\frac{Y}{b(x)}\right)^2\middle|x\right] \\
&=\mathbb{E}\left[\frac{Y}{b(x)}\middle|x\right]
=\frac{m(x)}{b(x)} .
\end{align}
The multiplicative prediction exactly recovers the Bayes target:
\begin{equation}
\hat{y}^*(x)=z^*(x)b(x)=m(x).
\end{equation}
This result establishes the recovery subproblem as an expectation-preserving operator. In joint training with shared bottom layers, stop-gradient maintains this operator by forming the recovery label as $Y/\operatorname{sg}[b(x)]$: the recovery head receives gradients, but the proportional-label path does not update the PIT coordinate, clipping operator, inverse-quantile lookup, or $b_{\min}$ floor.

\paragraph{Joint-training approximation.}
The fixed-base result describes the population optimum of the SUN subproblem. In joint neural training, the base $b_t(x)$ may change between optimization steps through the PIT branch and shared representation. Let
\[
\Delta b_t(x)=b_{t+1}(x)-b_t(x)
\]
be the adjacent-step base drift, and suppose the ratio head is locally bounded by $|z_t(x)|\le M_z$. If the recovery head update is evaluated against the previous base $b_t(x)$, the multiplicative prediction perturbation induced by base drift satisfies
\begin{equation}
\left|z_t(x)b_{t+1}(x)-z_t(x)b_t(x)\right|
\le M_z|\Delta b_t(x)|.
\end{equation}
Averaging over the data distribution gives the empirical proxy
\begin{equation}
\mathcal{E}_{\mathrm{drift}}
=
\mathbb{E}_x\left[|z_t(x)|\,|\Delta b_t(x)|\right],
\end{equation}
which measures the gap between the fixed-base recovery interpretation and the joint-training trajectory. We also monitor a gradient-interference proxy
\begin{equation}
\rho_{\mathrm{PIT}}
=
\frac{
\|\nabla_{\theta_{\mathrm{PIT}}}\mathcal{L}_{\mathrm{SUN}}\|
}{
\|\nabla_{\theta_{\mathrm{PIT}}}\mathcal{L}_{\mathrm{PIT}}\|
+
\|\nabla_{\theta_{\mathrm{PIT}}}\mathcal{L}_{\mathrm{SUN}}\|
+\varepsilon
},
\end{equation}
where $\theta_{\mathrm{PIT}}$ denotes the parameters controlling the PIT coordinate and inverse-quantile base. A smaller $\rho_{\mathrm{PIT}}$ indicates that recovery residuals do not dominate the geometry learned by the PIT branch. Table~\ref{tab:two_stage} reports adjacent-epoch $\Delta b$ percentiles and $\rho_{\mathrm{PIT}}$, verifying that joint training with stop-gradient stays close to the fixed-base interpretation while avoiding the cost of a two-stage pipeline.

\subsection{A.4 Base Quality, Bounded Constants, and Empirical Quantiles}
The previous subsection shows that any positive fixed base preserves expectation consistency at the population optimum. The remaining question is why the proposed base is a stable choice for finite-sample learning. Let $R_b=Y/b(x)$ denote the recovery label. For a fixed $x$,
\begin{equation}
\operatorname{Var}(R_b\mid x)=\frac{\operatorname{Var}(Y\mid x)}{b(x)^2},
\end{equation}
so very small bases inflate the regression noise seen by the SUN branch. Conversely, if $b(x)$ is much larger than the local scale of $Y\mid x$, the optimal ratio $m(x)/b(x)$ becomes unnecessarily small and may underuse a bounded prediction range. If the ratio head is effectively constrained to $z(x)\in[0,M]$, then exact recovery of $m(x)$ requires $b(x)\ge m(x)/M$, while variance control favors bases that are not too small. This motivates using a scale proxy that tracks the predicted PIT location rather than a global constant.

The inverse-quantile base $C^{-1}(\bar f(x))$ provides such a proxy: it maps the learned marginal rank back to an original-scale value, thereby adapting the denominator to the region of the empirical label distribution where $x$ is predicted to lie. This makes the recovery base compatible with the PIT coordinate rather than an independently tuned transform. The floor $b_{\min}$ then handles the failure mode of sparse zero plateaus by enforcing
\begin{equation}
b(x)\ge b_{\min}+\varepsilon,
\end{equation}
and therefore
\begin{equation}
\operatorname{Var}\left(\frac{Y}{b(x)}\middle|x\right)
\le
\frac{\operatorname{Var}(Y\mid x)}{(b_{\min}+\varepsilon)^2}.
\end{equation}
This bound shows why $b_{\min}$ is part of the recovery architecture rather than another manually selected target transform: it clips the variance of recovery labels while leaving the PIT representation geometry unchanged. The bound is loose because it uses only the global floor and ignores how $C^{-1}(\bar f(x))$ tracks local rank. Therefore it should not be read as an analytical formula for the optimal floor. In practice, Appendix D selects $q_b$ by jointly monitoring ratio variance, high-percentile ratios, floor-active fraction, and NMAE/TRE; this connects Eq.~(\ref{eq:app_base}) to the clipped-fraction columns in Tables~\ref{tab:bmin_sens_dur}--\ref{tab:bmin_sens_gmv}.

Clipping enforces $|C_\delta(Y)|\le a_\delta$, so if $|f_\theta(x)|\le a_f$ after output clipping and $\|\nabla_\theta f_\theta(x)\|\le G_f$ (or their empirical high-percentile training-log estimates), the PIT-MSE gradient obeys
\begin{equation}
\left\|\nabla_\theta \ell_{\mathrm{PIT}}(x,y)\right\|
\le 2(a_f+a_\delta)G_f .
\end{equation}
It also bounds total and average conditional variance:
\begin{equation}
\operatorname{Var}(C_\delta(Y))\le a_\delta^2,
\quad
\mathbb{E}_X[\operatorname{Var}(C_\delta(Y)\mid X)]\le a_\delta^2.
\end{equation}
For the empirical CDF, the DKW inequality gives, with probability at least $1-2e^{-2N\eta^2}$,
\begin{equation}
\sup_y |\hat{F}_Y(y)-F_Y(y)|\le \eta .
\end{equation}
After clipping away from the boundary, the normal-score labels inherit a controlled uniform error proportional to $\eta/\phi(a_\delta)$. For a fixed PIT prediction $w\in[-a_\delta,a_\delta]$, if the population quantile $Q(u)=F_Y^{-1}(u)$ is locally Lipschitz with constant $L_Q$ around $u=\Phi(w)$, the induced base error satisfies
\begin{equation}
|\hat{b}(x)-b(x)|
\le L_Q\eta
\end{equation}
away from the active $b_{\min}$ floor, and is further truncated when the floor is active. Consequently the final multiplicative estimator obeys the local perturbation relation
\begin{equation}
|\hat{z}(x)\hat{b}(x)-z(x)b(x)|
\le |\hat{z}(x)-z(x)|b(x)+|\hat{z}(x)|L_Q\eta,
\end{equation}
which makes explicit how empirical CDF error propagates through the base rather than implying a distribution-free guarantee under arbitrary drift. The slope $L_Q$ is estimated in deployment by adjacent quantile-table differences within monitored buckets; large slopes near zero spikes or upper tails are precisely where $b_{\min}$ and refresh diagnostics matter. This local relation underlies Appendix D.5: quantile KS proxies $\eta$, and refresh is triggered when quantile discrepancy or bucket-level calibration drift exceeds the validation-calibrated band. In our industrial diagnostic, sliding-window KS stays near $0.04$, while stale KS reaches $0.118$ with worse worst-block TRE.

\subsection{A.5 Estimator Comparison}
Table~\ref{tab:estimators} summarizes the estimator-level distinctions. Transformed MSE and PIT-MSE have stable learning targets but biased direct inversions; by the affine-inverse argument, this is unavoidable for any nonlinear stabilizing coordinate. TranSUN/GTS restore expectation consistency through conditional linearity, but their constants depend on manually selected transforms. PIT-SUN keeps the same recovery principle while deriving the coordinate and base from the empirical marginal distribution.

\begin{table}[h]
\centering
{\fontsize{9pt}{10pt}\selectfont\setlength{\tabcolsep}{3pt}
\begin{tabular}{lll}
\toprule
Method & Estimator & Issue \\
\midrule
T-MSE & $T^{-1}(\mathbb{E}[T(Y)|x])$ & Bias \\
PIT-MSE & $C^{-1}(\mathbb{E}[C(Y)|x])$ & Bias \\
TranSUN/GTS & Recovery with manual $T$ & Manual choice \\
\textbf{PIT-SUN} & Empirical PIT + SUN & CDF refresh \\
\bottomrule
\end{tabular}
}
\caption{Estimator-level comparison.}
\label{tab:estimators}
\end{table}

\section{Appendix B: FAQ and Extended Discussions}
\label{app:design}
Appendix B keeps the FAQ form because these questions match common concerns when applying PIT-SUN in industrial recommender systems. The answers focus on what each design choice solves: preserving the MSE Bayes target, controlling finite-sample constants, handling sparse empirical CDFs, and clarifying deployment extensions.

\subsection{B.1 Does PIT-SUN replace original-space MSE's Bayes target?}
No. Original-space MSE already targets $m(x)$, and PIT-SUN does not introduce a new Bayes target. The problem lies in finite-sample optimization under complex marginals: heavy tails inflate gradients of the form $2(g(x)-y)\nabla g(x)$, exposing training to unbounded target amplitudes.

PIT-SUN reparameterizes the learning route while keeping the same asymptotic point target. PIT supplies a bounded coordinate for representation learning, and SUN recovery returns the estimator to $m(x)$. Table~\ref{tab:mse_relation} summarizes this relationship.

\begin{table}[h]
\centering
{\fontsize{9pt}{10pt}\selectfont\setlength{\tabcolsep}{3pt}
\begin{tabular}{lcc}
\toprule
Method & Point target & Finite-sample behavior \\
\midrule
Orig. MSE & $m(x)$ & Exposed to extreme amplitudes \\
PIT-MSE & $C^{-1}(\mathbb{E}[C(Y)\mid x])$ & Bounded but biased \\
\textbf{PIT-SUN} & $m(x)$ & Bounded coordinate + recovery \\
\bottomrule
\end{tabular}
}
\caption{Relationship between original MSE and PIT-SUN.}
\label{tab:mse_relation}
\end{table}

\subsection{B.2 Is PIT-SUN more than replacing TranSUN/GTS's manual transform with PIT?}
Yes. PIT-SUN inherits the conditional-linearity insight of TranSUN/GTS, but the algorithmic object being selected is different. The first distinction is structural: by the affine-inverse argument in Appendix A, no nonlinear stabilizing coordinate can be directly inverted into a universally valid conditional mean. Therefore the question is not whether PIT is a better manual transform, but how to close a nonlinear coordinate with an expectation-preserving recovery path.

The second distinction is the base. A direct plug-in replacement, PIT-TranSUN, sets $T(y)=\Phi^{-1}(\hat F_Y(y))$ in the TranSUN/GTS template and uses a bare inverse lookup as the base. This baseline benefits from empirical PIT, but remains fragile near zero plateaus and sparse tails because denominator collapse amplifies the proportional recovery labels. PIT-SUN instead treats the empirical marginal table as a closure object: the same table defines the bounded PIT coordinate, its generalized inverse, the stabilized inverse-quantile base, deployment lookup, and drift monitoring. PIT-SUN-ZI is a sparse-value extension of the same closure: it separates zero occurrence from positive amount while keeping the amount branch expectation-consistent through SUN recovery.

This is why the contribution is not a plug-in transform. PIT alone solves bounded representation but not expectation recovery; TranSUN/GTS solve conditional linearity but leave transform and base manual; PIT-TranSUN tests their naive composition and exposes base instability. PIT-SUN closes all three gaps through one empirical marginal object.

\subsection{B.3 How does PIT-SUN improve the constants behind recovery?}
Recovery methods such as TranSUN/GTS rely on conditionally linear recovery, but their finite-sample behavior still depends on the manually chosen transform $T$ and recovery base. PIT-SUN keeps the same recovery principle while replacing the uncertain transform-dependent constant with the controlled PIT constant $a_\delta$.

\begin{table}[h]
\centering
{\fontsize{9pt}{10pt}\selectfont\setlength{\tabcolsep}{3pt}
\begin{tabular}{lcc}
\toprule
Source & Manual transform & PIT-SUN \\
\midrule
Label scale & $B_T$ depends on tail & $a_\delta$ is fixed \\
Gradient bound & $2(a_f+B_T)G_f$ & $2(a_f+a_\delta)G_f$ \\
Cond. noise & No uniform bound & $\le a_\delta^2$ \\
Rec. variance & Base mismatch risk & Clipped by $b_{\min}$ \\
\bottomrule
\end{tabular}
}
\caption{Comparison of finite-sample constants.}
\label{tab:bounds}
\end{table}

The key point is that PIT-SUN explicitly controls the constants most responsible for mean collapse and transform sensitivity. The bounded PIT coordinate caps the main-branch label scale, and the lower-bounded inverse-quantile base caps recovery-label variance. When training data are limited, empirical CDF and tail-quantile estimates can be regularized or pooled, while the recovery architecture and its controlled constants stay unchanged.

\subsection{B.4 Why is $b_{\min}$ a variance clipper rather than a heuristic transform?}
$b_{\min}$ does not define the representation geometry. The representation is already determined by the empirical PIT coordinate $C(y)$. Instead, $b_{\min}$ only acts on the inverse-quantile recovery base in Eq.~\eqref{eq:base}. It is triggered when $C^{-1}(\bar{f}(x))$ falls into a near-zero plateau, which is common under zero-inflation.

Without this lower bound, the proportional target $Y/b(x)$ can have extremely large variance when $b(x)\approx\varepsilon$. Enforcing $b_{\min}$, such as a positive-target quantile in a small candidate range, regularizes the denominator while preserving the conditional linearity of the recovery subproblem. Therefore, $b_{\min}$ is a recovery-label variance control, not another manually selected target transform. The ablations and sensitivity tables report ratio variance, high-percentile ratios, and clipped fraction precisely to make this dependence visible rather than hiding it as automatic tuning.

\subsection{B.5 What theoretical role does each PIT-SUN component play?}
The analysis decomposes PIT-SUN into three linked operators. First, empirical PIT creates a bounded normal-score coordinate, replacing transform-dependent tail constants with the controlled clipping constant $a_\delta$. Second, the PIT-only bias result proves that this coordinate cannot be directly inverted when the target is the original-space expectation. Third, SUN recovery provides an expectation-preserving multiplicative operator under a fixed positive base, while the inverse-quantile base and $b_{\min}$ determine its finite-sample conditioning.

The recovery label has conditional variance $\operatorname{Var}(Y/b(x)\mid x)=\operatorname{Var}(Y\mid x)/b(x)^2$, so small bases amplify recovery noise. The inverse-quantile base acts as a data-adaptive scale proxy tied to the predicted PIT location, while $b_{\min}$ prevents zero-plateau denominators from exploding the ratio target. The empirical CDF error analysis then shows how quantile-table error propagates through the base and the final multiplicative prediction. Together, these results justify PIT-SUN as a coupled coordinate--base--recovery construction rather than a standalone transform or post-hoc retransformation correction.

The theory is therefore constructive: PIT supplies bounded main-branch labels, the PIT-only bias result rules out direct inverse prediction, SUN restores the expectation through a conditionally linear recovery path, and the inverse-quantile base with a lower floor keeps that path well-conditioned under sparse and heavy-tailed marginals. Model misspecification and joint optimization are evaluated through ablations, stop-gradient comparisons, sensitivity analyses, and deployment diagnostics.

\subsection{B.6 How does PIT-SUN handle empirical CDF errors and data drift?}
Extreme distribution tails are sparse by nature, and online recommender traffic may drift during promotions, holidays, or seasonal changes. Empirical quantiles at the top 0.1\% can fluctuate, and the derivative of $\Phi^{-1}$ increases near the boundary. PIT-SUN controls this source of error through four mechanisms.

First, $\delta$-clipping caps the normal-score range at $[-a_\delta,a_\delta]$, preventing boundary divergence. Second, industrial deployment uses large sliding-window quantile tables rather than noisy minibatch CDFs, reducing lookup variance. Third, the table is refreshed on recent traffic and monitored with quantile-drift statistics; when drift exceeds a validation-calibrated band, deployment refreshes the CDF, shortens the refresh interval, or upweights recent samples. In the anonymized diagnostic of Appendix D.5, a quantile-KS level around $0.04$ remains close to oracle-table behavior, while a stale-table level around $0.12$ coincides with degraded worst-block TRE; these are not universal constants, only an example of how the local error bound becomes an operational monitor. Fourth, PIT is not directly inverted as the final prediction; the SUN branch performs expectation recovery, so isolated tail lookup errors affect the scale base but do not by themselves define the final expectation estimate. In short, PIT-SUN does not assume a permanently stable empirical CDF; it maintains coordinate consistency through drift monitoring and sliding refresh. Appendix A gives a local error-propagation bound, and Appendix D includes a chronological drift diagnostic.

\subsection{B.7 Should tied values use deterministic mid-rank or randomized PIT?}
Both are valid, but they serve different data regimes. For massive zero-repeated targets such as dwell time and GMV, zero often has deterministic business semantics. Mid-rank PIT maps the zero mass into a stable quantile interval and avoids injecting unnecessary noise. In our default implementation, mid-rank means $\rho=0.5$ in Eq. (12) for both training and inference.

Randomized PIT is useful when repeated values arise from coarse quantization rather than a meaningful zero process. In that case, unfolding samples within a bin can reduce artificial ties and improve intra-bin ranking. When randomized PIT is enabled, $\rho$ is resampled only for tied training labels at the beginning of an epoch or cached with the training example for deterministic reproduction; validation, test, and online serving use deterministic mid-rank lookup unless the experiment explicitly reports the randomized variant. The rand-rank row in Table~\ref{tab:dwell} reports its effect on NMAE/TRE/xAUC and ratio behavior in a high-tie setting. Because clipping and atom handling replace the ideal smooth PIT with a generalized-inverse approximation, the Taylor statements in the theory should be read as interior smooth approximations plus boundary/atom remainders controlled by clipped mass and tie mass. In our main industrial setting, mid-rank PIT is preferred for semantic stability, while randomized PIT remains optional.

\subsection{B.8 When is PIT-SUN-ZI necessary?}
Single-track PIT-SUN handles zero mass through mid-rank PIT and $b_{\min}$. PIT-SUN-ZI further factorizes occurrence and positive amount when the deployment target requires explicit zero-positive monitoring. Practitioners should choose PIT-SUN-ZI over PIT-SUN when (i) Zero-AUC, positive recall, or conversion-boundary monitoring is a primary deployment metric, (ii) downstream policies consume probability and positive amount separately, or (iii) product debugging requires identifying whether errors come from occurrence prediction or positive-value recovery. Zero-AUC is therefore an important trigger, but not the only one: operational interpretability and policy modularity are equally relevant. If the main objective is end-to-end value calibration, TRE/NMAE, or ranking of expected value, the simpler single-track PIT-SUN remains the default estimator.

For example, in transaction modeling, business logic may require separately tracking conversion probability and positive transaction amount. PIT-SUN-ZI uses a hurdle track for $P(Y>0\mid x)$ and applies PIT-SUN only on $Y>0$ to estimate $\mathbb{E}[Y\mid x,Y>0]$. The final prediction is
\begin{equation}
\hat{y}(x)=P(Y>0\mid x)\,\mathbb{E}[Y\mid x,Y>0].
\end{equation}
This extension improves interpretability and modularity, while the default single-track PIT-SUN remains the simpler deployment choice for end-to-end value calibration.

\subsection{B.9 Why is stop-gradient used in joint optimization?}
The expectation-consistency theorem is most direct under a fixed-base recovery subproblem. In joint neural training, however, the recovery loss could otherwise update the shared representation and the inverse-quantile path through high-variance proportional labels. The stop-gradient operator preserves the fixed-base interpretation by forming $r=Y/\operatorname{sg}[b(x)]$: gradients from $\mathcal{L}_{\mathrm{SUN}}$ train $z_\psi$ and any shared layers through the ratio-head path, but they do not flow through the denominator lookup or floor.

This statement is architectural rather than a generic convergence claim: it enforces the branch separation required by PIT-SUN. PIT learns the stable coordinate, SUN recovery learns the expectation ratio, and stop-gradient prevents recovery noise from rewriting the coordinate through the inverse-quantile lookup. Two-stage training matches the fixed-base analysis most directly, while joint training with stop-gradient preserves the same separation in a simpler deployable pipeline.

\subsection{B.10 Can PIT-SUN be used as a multi-task scale-aligner?}
Yes. In multi-task learning systems such as MMoE, binary targets such as CTR are often trained together with continuous targets such as dwell time, GMV, or LTV. These continuous targets can differ by orders of magnitude, producing gradient conflicts and unstable loss weighting.

The empirical PIT branch maps each continuous target into a comparable normal-score range $[-a_\delta,a_\delta]$. This makes PIT-SUN a model-agnostic scale-aligner for heterogeneous regression tasks. The SUN branch then restores each task's original-space expectation, so scale alignment does not sacrifice business-scale calibration.

\subsection{B.11 What are the main limitations and boundary cases?}
PIT-SUN is designed for the regimes where heavy tails, sparsity, or heterogeneous target scales make original-space MSE difficult to optimize. When targets are already well-conditioned or sample sizes are too small for reliable tail quantiles, the empirical table can be pooled, smoothed, or refreshed over larger windows while preserving the same recovery architecture.

The method uses a global marginal CDF $F_Y$ by default because this choice gives one stable deployment object for coordinate construction, inverse lookup, recovery-base construction, and drift monitoring. Under strong subgroup heterogeneity, smoothed, pooled, or segment-aware CDF tables can be introduced when enough per-segment data are available, as evaluated in Appendix D. A practical rule is to keep the global table unless subgroup quantile KS and bucket-level calibration gaps persist beyond the validation-calibrated drift band; segment-aware tables are used only when each segment has enough recent samples to estimate stable tail quantiles.

Finally, the CDF table should be built only from training labels or causally available recent traffic. Offline test labels are never used for lookup construction, and online tables should be refreshed and monitored for drift.

\section{Appendix C: Experimental Settings}
\label{app:exp_setting}

Appendix C documents the experimental settings in the same order as the main evaluation pipeline: synthetic experiments, public benchmarks, and industrial datasets. The synthetic setting includes the full generation protocol and reference sketch, while Appendix D provides additional experimental supplements and analyses.

\subsection{C.1 Synthetic Experiment Setting}
Figure~\ref{fig:syn_dist} visualizes the 12 synthetic datasets used for the synthetic robustness study. The suite deliberately spans right-skewed heavy-tailed, left-skewed, and symmetric target marginals induced by different conditional noise families, so that robustness can be tested beyond a single analytical family. The supervised data-generation mechanism follows the standard order $X \rightarrow m(X) \rightarrow Y\mid X$; the detailed construction and reference implementation are included below as part of this synthetic setting.

\begin{figure*}[h]
    \centering
    \includegraphics[width=0.85\textwidth]
    {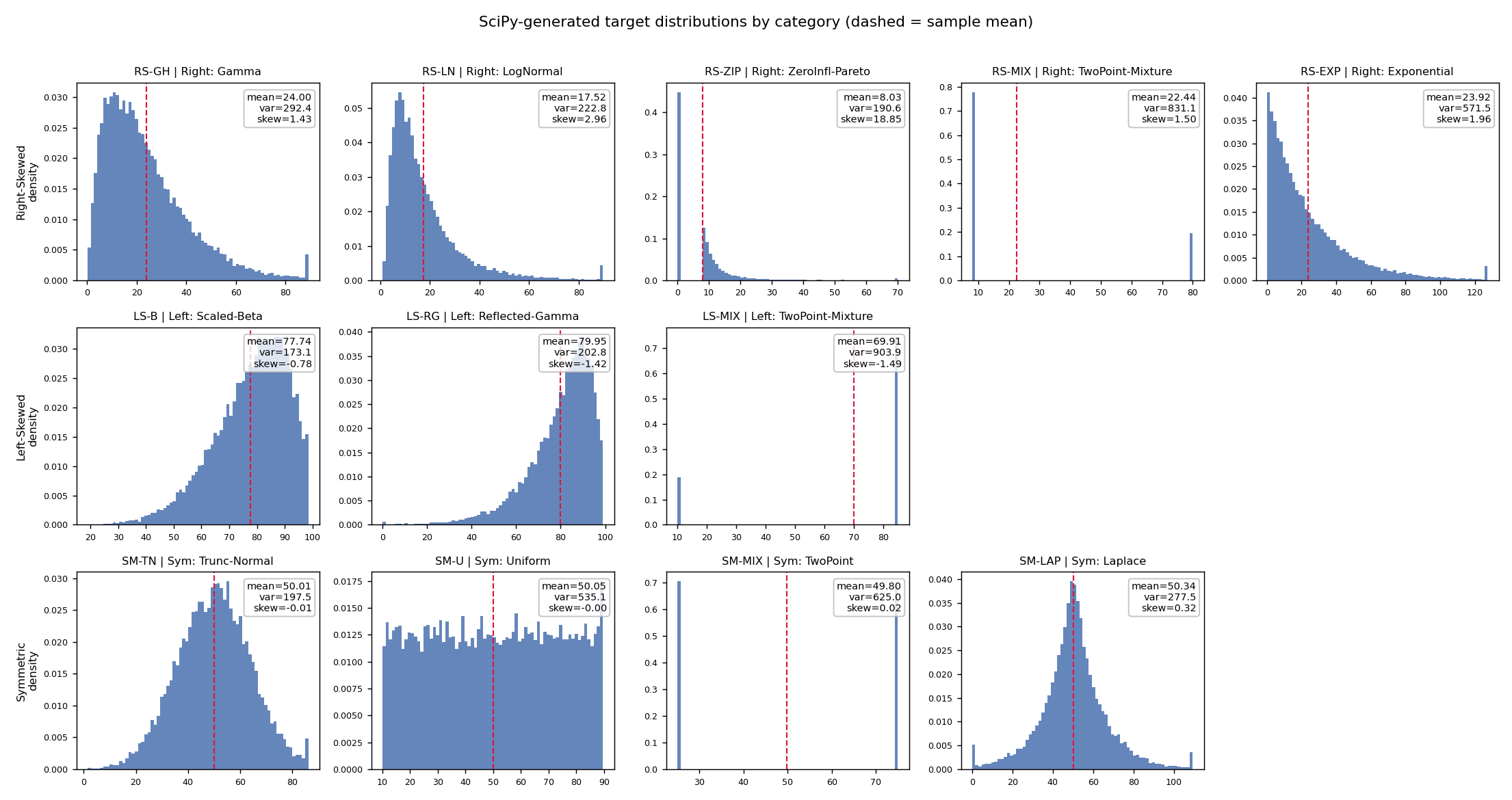}
    \caption{Marginal distributions of the 12 synthetic datasets used for robustness evaluation under heterogeneous empirical marginals.}
    \label{fig:syn_dist}
\end{figure*}

\begin{table}[h]
\centering
{\fontsize{9pt}{10pt}\selectfont\setlength{\tabcolsep}{3pt}
\begin{tabular}{llrr}
\toprule
Abbr & Category: Name & Mean($Y$) & Skew($Y$) \\
\midrule
RS-GH & right: Gamma & 23.996 & 1.431 \\
RS-LN & right: LogNormal & 17.521 & 2.964 \\
RS-ZIP & right: ZeroInfl-Pareto & 8.028 & 18.854 \\
RS-MIX & right: TwoPoint-Mixture & 22.440 & 1.496 \\
RS-EXP & right: Exponential & 23.918 & 1.964 \\
LS-B & left: Scaled-Beta & 77.737 & -0.782 \\
LS-RG & left: Reflected-Gamma & 79.953 & -1.420 \\
LS-MIX & left: TwoPoint-Mixture & 69.914 & -1.491 \\
SM-TN & sym: Trunc-Normal & 50.008 & -0.013 \\
SM-U & sym: Uniform & 50.046 & -0.000 \\
SM-MIX & sym: TwoPoint & 49.795 & 0.016 \\
SM-LAP & sym: Laplace & 50.345 & 0.321 \\
\bottomrule
\end{tabular}
}
\caption{Synthetic dataset statistics. The suite covers right-skewed, left-skewed, and near-symmetric marginals.}
\label{tab:synthetic_stats}
\end{table}

This subsection provides the detailed synthetic setting specification used for the robustness experiments in the main text. It documents the supervised generator as part of Appendix C's experimental settings. The goal is to evaluate expectation recovery under controlled supervised regression problems, while varying the marginal tail shape and zero inflation without leaking target ranks into the features.

\subsubsection{Synthetic Data Generation Protocol}
\label{app:synthetic_protocol}
\paragraph{Construction Mechanics.}
Each dataset first samples covariates $X\sim\mathcal{N}(0,I_d)$ and defines a known conditional mean $m(X)$. We use either a smooth single-index mean $m(X)=\operatorname{softplus}(w^\top X+c)$ or a lightweight mixture-of-experts mean whose gate is a logistic function of $X$. The observed target is then sampled from a conditional distribution $Y\mid X$ whose expectation is $m(X)$ but whose noise family controls the marginal difficulty. For example, lognormal and Gamma variants are parameterized so their conditional means equal $m(X)$; zero-inflated variants sample $Y=0$ with probability $\pi(X)$ and otherwise sample a positive amount with mean $m(X)/(1-\pi(X))$; mixture variants combine low- and high-scale components with an $X$-dependent gate.

This construction avoids using the empirical rank of $Y$ as an input feature. The features are sampled before the target, the ground-truth $m(X)=\mathbb{E}[Y\mid X]$ is known analytically for the main variants or estimated by high-sample Monte Carlo for mixture variants, and evaluation compares original-space expectation error rather than reconstruction from rank-coded features.

The 12 variants cover three regimes: five right-skewed heavy-tailed conditional families (Gamma, LogNormal, zero-inflated Pareto-like positive amounts, two-component right mixture, and Exponential), three left-skewed bounded or reflected families (Scaled-Beta, Reflected-Gamma, and two-component left mixture), and four symmetric or near-symmetric families (Truncated-Normal, Uniform-noise, two-component symmetric mixture, and Laplace). All variants use $N_{\mathrm{train}}=80{,}000$ and $N_{\mathrm{test}}=20{,}000$. The oracle reference for synthetic evaluation is the known conditional expectation $m(X)$, so the benchmark isolates original-space expectation estimation under controlled marginal and conditional-noise shapes.

\paragraph{Reference Implementation Sketch.}
Listing~\ref{lst:synthetic_protocol} summarizes the generation logic in a compact, implementation-agnostic form. The key idea is to vary the conditional noise family while keeping covariate distribution, mean function, sample size, model capacity, and evaluation protocol fixed across all synthetic datasets.

\begin{center}
\captionof{listing}{Reference sketch for the synthetic data generation protocol.}
\label{lst:synthetic_protocol}
\end{center}
\begin{lstlisting}[language=Python]
N_train, N_test, d = 80000, 20000, 8

families = {
    # conditional families with E[Y|X] = m(X)
    "RS-GH":  GammaNoise(shape=2.0),
    "RS-LN":  LogNormalNoise(sigma=0.75),
    "RS-ZIP": ZeroInflated(ParetoLikePositive(shape=2.2), pi_x),
    "RS-MIX": RightMixture(low_scale=0.6, high_scale=3.0, gate_x),
    "RS-EXP": ExponentialNoise(),

    # left-skewed / bounded variants
    "LS-B":   ScaledBetaNoise(alpha=7.0, beta=2.0),
    "LS-RG":  ReflectedGammaNoise(shape=2.0),
    "LS-MIX": LeftMixture(low_scale=0.4, high_scale=1.2, gate_x),

    # symmetric or near-symmetric variants
    "SM-TN":  TruncatedNormalNoise(std=0.35),
    "SM-U":   UniformNoise(width=0.5),
    "SM-MIX": SymmetricMixture(scale_a=0.7, scale_b=1.3),
    "SM-LAP": LaplaceNoise(scale=0.4),
}

for name, noise_family in families.items():
    X = sample_normal_features(N_train + N_test, d)
    m = softplus(X @ w + c)          # or an MoE mean m(X)
    y = sample_conditional_target(noise_family, mean=m)
    train, test = chronological_or_random_split((X, y, m), sizes=[N_train, N_test])
    evaluate_all_methods(train, test, target="E[Y|X]", oracle_mean=m)
\end{lstlisting}

\paragraph{Reproducibility Notes.}
All synthetic results use the same training sizes, feature dimension, random-seed grid, optimizer settings, PIT clipping probability, quantile-table capacity, and $b_{\min}$ rule reported in Appendix C. The conditional noise family is the controlled factor varied across the 12 synthetic datasets, enabling the SRE table to isolate empirical-marginal recovery robustness rather than target-rank leakage, model-capacity, or data-volume effects.

\subsection{C.2 Public Benchmark Setting}
We utilize two public benchmarks: CIKM16 and DTMart. \textbf{CIKM16} predicts the dwell time of online searching sessions using intra-session item features, comprising 310,302 sessions and 122,991 items. \textbf{DTMart} forecasts the transaction amount per order in marketing activities, incorporating item IDs, categories, procurement, and delivery features across 985,033 orders and 15,546 items. Both are partitioned using a 4:1 random train/test split.

\subsection{C.3 Industrial Dataset Setting}
The two large-scale industrial datasets (\textbf{Dwell Time} and \textbf{GMV}) adhere to a strict chronological split protocol. Both are anonymized at the tens-of-millions scale for training and testing, with over 1,000 dense and sparse features. The split is constructed by time so that training windows precede validation and test windows; CDF/quantile tables are built only from training labels or validation-selected training windows. Business-control features that are only available after exposure, allocation, or downstream feedback are excluded from the offline feature set, and the same feature availability mask is applied to every baseline. We additionally perform a leakage audit by checking timestamp monotonicity, removing post-treatment aggregates, verifying that lookup tables never consume validation/test labels, and confirming that model inputs are identical across treatment and baseline variants except for method-specific heads or losses. For the industrial comparisons in Tables~\ref{tab:dwell}--\ref{tab:gmv}, all baselines and PIT-SUN variants are implemented in the same production training stack with the same MLP backbone architecture, identical dense/sparse feature set, common optimizer family and schedule, and the same preprocessing pipeline. Industrial metrics are normalized from the same held-out labels using identical denominators, pair sampling, and aggregation windows for every model. The shared backbone is an MLP optimized over 5 epochs. Table~\ref{tab:desc_stat} reports compliance-preserving descriptive statistics. We avoid disclosing overly granular quantiles or exact sample counts; instead, the table highlights distributional shape, sparsity, concentration, and tail severity through normalized or ratio-based indicators. These statistics validate the marginal complexity discussed in the main text and justify evaluating point errors, aggregate calibration, and ranking metrics jointly.

\begin{table}[h]
\centering
{\fontsize{9pt}{10pt}\selectfont\setlength{\tabcolsep}{3pt}
\begin{tabular}{lcc}
\toprule
Statistic & Dwell Time & GMV \\
\midrule
Scale & 100+ millions & 100+ millions \\
Feature Dim. & $>1{,}000$ & $>1{,}000$ \\
Zero Rate & 0.150 & 0.299 \\
Coeff. of Variation & 1.56 & 5.40 \\
Skewness & 2.75 & 60.58 \\
Excess Kurtosis & 19.71 & $1.08\times10^{4}$ \\
Top 1\% Share & 0.088 & 0.339 \\
Top 5\% Share & 0.300 & 0.628 \\
\bottomrule
\end{tabular}
}
\caption{Anonymized descriptive statistics of large-scale industrial datasets. Statistics are compliance-preserving summaries of shape, concentration, and tail severity.}
\label{tab:desc_stat}
\end{table}

\subsection{C.4 Shared Training Protocols and Evaluation Metrics}
Table~\ref{tab:train_proto} summarizes the primary offline training protocols uniformly applied across the synthetic, public, and industrial settings. The offline comparisons in Tables~\ref{tab:public}--\ref{tab:gmv} share the same backbone within each dataset group, the same feature set and split, common optimizer settings whenever applicable, and the same validation protocol for method-specific choices such as transform type, PIT clipping, table size, $b_{\min}$, and recovery weight. No method receives additional tuning after inspecting the test set.

\begin{table}[h]
\centering
{\fontsize{8pt}{9pt}\selectfont\setlength{\tabcolsep}{2pt}
\begin{tabular}{lccc}
\toprule
Setting & Synthetic & CIKM16 / DTMart & Dwell/GMV \\
\midrule
Split & 80k / 20k & Random 4:1 & Time split \\
Feature dim. & 8 & Dataset-specific & $>1{,}000$ \\
Seeds & 5 & 5 & 3 \\
Optimizer & Adam & Adam & AdamW \\
Learning rate & 1e-3 & 1e-3 & 5e-4 \\
Batch size & 1,024 & 2,048 & 8,192 \\
Epochs & 80 & 30 & 5 \\
Quantile table $K$ & 4,000 & 8,000 & 20,000 \\
PIT clip $\delta$ & $10^{-4}$ & $10^{-4}$ & $10^{-4}$ \\
$b_{\min}$ & Pos. $P_{10}$ & Pos. $P_{10}$ & Pos. $P_{10}$ \\
$\lambda$ & 1.0 & 1.0 & 1.0 \\
\bottomrule
\end{tabular}
}
\caption{Primary offline training protocols across datasets.}
\label{tab:train_proto}
\end{table}

\paragraph{Industrial online A/B isolation.} The online experiment uses randomized user buckets in the production allocation framework. The control is the existing mixed production strategy, while treatment replaces only the value-estimation component with PIT-SUN; allocator, exploration policy, ranking/serving modules, budget constraints, guardrails, and feature eligibility remain fixed. The online setup uses production features, continuous training/serving schedules, sliding quantile tables, and 7 daily blocks for stability inspection. The single-objective setting uses dwell-time reward, and the multi-objective setting combines normalized dwell-time, engagement-depth, and commercial-value rewards under the same reward combiner and pacing rules. Reported lifts use user-level bootstrap confidence intervals; exact user counts and traffic percentages remain anonymized for compliance.

We systematically evaluate the regression estimators across three critical dimensions.
\textbf{Point Estimation Error:} NMAE and NRMSE are normalized by the target mean $\bar{y}=\frac{1}{N}\sum_i y_i$ to facilitate cross-domain comparisons:
\begin{align}
\mathrm{NMAE}
&=\frac{\frac{1}{N}\sum_i |\hat{y}_i-y_i|}{\bar{y}+\epsilon},\\
\mathrm{NRMSE}
&=\frac{\sqrt{\frac{1}{N}\sum_i(\hat{y}_i-y_i)^2}}{\bar{y}+\epsilon}.
\end{align}
\textbf{Aggregate Business Calibration:} The Signal Relative Error (SRE) quantifies the absolute relative deviation of the aggregated predicted signal versus the true signal:
\begin{equation}
\mathrm{SRE} = \frac{\left|\sum_i \hat{y}_i - \sum_i y_i\right|}{\sum_i y_i + \epsilon}.
\end{equation}
We additionally report TRE, MRE, and PGR:
\begin{equation}
\mathrm{TRE}=\frac{\sum_i |\hat{y}_i-y_i|}{\sum_i y_i+\epsilon},
\quad
\mathrm{MRE}=\frac{1}{N}\sum_i\frac{|\hat{y}_i-y_i|}{y_i+\epsilon},
\end{equation}
\begin{equation}
\mathrm{PGR}=\frac{\frac{1}{N}\sum_i \hat{y}_i-\frac{1}{N}\sum_i y_i}{\frac{1}{N}\sum_i y_i+\epsilon}.
\end{equation}
Here TRE measures total residual mass relative to true mass, MRE measures instance-wise relative error, and PGR preserves the sign of aggregate prediction gain to diagnose overestimation or underestimation.
\textbf{Ranking Quality:} xAUC evaluates pairwise ranking correctness for high-value continuous targets. We sample video pairs $(i,j)$ from the test set; if the predicted dwell-time order and the true dwell-time order agree, the pair receives score 1, otherwise 0, following an AUC-like pairwise comparison:
\begin{equation}
\mathrm{xAUC}=\mathbb{E}_{(i,j)}\left[\mathbb{I}\left((\hat{y}_i-\hat{y}_j)(y_i-y_j)>0\right)\right].
\end{equation}
Normalized Gini assesses the capability to capture true cumulative values upon prediction-based sorting. Spearman's $\rho$ measures global monotonic consistency.

\begin{table*}[t]
\centering
{\fontsize{9pt}{10pt}\selectfont\setlength{\tabcolsep}{3pt}
\begin{tabular}{lcccccccccccc}
\toprule
Model & RS-GH & RS-LN & RS-ZIP & RS-MIX & RS-EXP & LS-B & LS-RG & LS-MIX & SM-TN & SM-U & SM-MIX & SM-LAP \\
\midrule
MSE & \underline{0.0473} & \underline{0.0090} & 0.0518 & 0.0589 & 0.0341 & 0.0087 & 0.0235 & 0.0126 & 0.0512 & \textbf{0.0064} & \textbf{0.0085} & 0.0334 \\
T-MSE(ln) & 0.1481 & 0.3166 & 0.4874 & 0.5965 & 0.3496 & 0.0075 & 0.0161 & 0.0235 & 0.0441 & 0.0878 & 0.3009 & 0.0964 \\
T-MSE(sqrt) & 0.0609 & 0.2144 & 0.4594 & 0.3709 & 0.2365 & 0.0069 & 0.0126 & 0.1473 & 0.0078 & 0.0823 & 0.1143 & 0.0478 \\
T-MSE($y^2$) & 0.2168 & 0.7787 & 0.8722 & 0.6912 & 0.4732 & 0.0541 & 0.0348 & 0.1099 & 0.0542 & 0.0580 & 0.1385 & 0.0663 \\
TranSUN(ln) & 0.0558 & 0.0904 & \textbf{0.0502} & \underline{0.0505} & 0.0828 & \underline{0.0052} & 0.0230 & 0.0194 & 0.0269 & 0.0151 & 0.0579 & 0.0350 \\
TranSUN(sqrt) & 0.0558 & 0.0517 & 0.0723 & 0.0520 & 0.0479 & \textbf{0.0042} & 0.0196 & 0.0125 & 0.0248 & 0.0154 & 0.0503 & 0.0194 \\
TranSUN($y^2$) & 0.0716 & 0.5223 & 0.0509 & 0.0704 & \textbf{0.0133} & 0.0114 & 0.0296 & \underline{0.0112} & 0.0364 & 0.0231 & 0.0587 & 0.0173 \\
GTS(ln) & 0.0783 & 0.1899 & 0.1521 & 0.0537 & 0.0419 & 0.0091 & \textbf{0.0078} & 0.0405 & \textbf{0.0049} & 0.0176 & 0.0220 & \underline{0.0127} \\
GTS(sqrt) & 0.0717 & 0.1725 & 0.1433 & 0.0512 & 0.0304 & 0.0065 & 0.0103 & 0.0248 & 0.0086 & 0.0180 & 0.0290 & 0.0136 \\
\midrule
PIT-SUN (ours) & \textbf{0.0280} & \textbf{0.0031} & \underline{0.0506} & \textbf{0.0474} & \underline{0.0249} & 0.0053 & \underline{0.0081} & \textbf{0.0106} & \underline{0.0076} & \underline{0.0132} & \underline{0.0143} & \textbf{0.0107} \\
\bottomrule
\end{tabular}
}
\caption{SRE ($\downarrow$) on 12 synthetic distributions ($p<0.05$). Best results are in bold, second-best are underlined.}
\label{tab:synthetic_sre_app}
\end{table*}

\section{Appendix D: Experimental Result Analyses}
\label{app:supp_exp}

Appendix D provides supplementary analyses. It covers full synthetic SRE results, fixed/oracle transform stability, public multi-seed stability, and industrial diagnostics on sensitivity, drift, bucket behavior, zero-positive decoupling, and optimization.

\subsection{D.1 Synthetic Full Results}
Table~\ref{tab:synthetic_sre_app} reports the complete SRE values behind Figure~\ref{fig:synthetic_radar}. The main text uses a radar chart to emphasize robustness patterns, while the full table is given here. Together with the five-seed protocol in Appendix C.4, the public stability table, and the fixed/oracle summary, these results show that the gains are not driven by a single seed; the radar chart is only a qualitative visualization, not a substitute for the numerical tables.

\subsection{D.2 Synthetic Fixed/Oracle Transform Stability}
Table~\ref{tab:multi_seed_oracle} summarizes the synthetic fixed/oracle transform comparison. PIT-SUN obtains the lowest average SRE without per-dataset transform selection, and its average and worst-case ranks remain close to the oracle recovery transform that is allowed to choose the best recovery transform after observing each dataset.

\begin{center}
{\fontsize{9pt}{10pt}\selectfont\setlength{\tabcolsep}{3pt}
\begin{tabular}{lrrrr}
\toprule
Selection & Avg. SRE $\downarrow$ & Avg. Rank $\downarrow$ & Worst $\downarrow$ & Wins \\
\midrule
Fixed log & 0.2062 & 8.25 & 10 & 0 \\
Fixed sqrt & 0.1468 & 6.92 & 10 & 0 \\
Fixed $y^2$ & 0.2957 & 9.58 & 10 & 0 \\
Oracle fixed & 0.1344 & 6.50 & 8 & 0 \\
Oracle rec. & 0.0249 & \textbf{1.83} & \textbf{3} & \textbf{6} \\
\midrule
\textbf{PIT-SUN} & \textbf{0.0191} & 1.92 & 4 & 4 \\
\bottomrule
\end{tabular}
}
\captionof{table}{Synthetic fixed/oracle transform stability comparison. Lower average SRE, average rank, and worst rank indicate stronger stability; wins count per-dataset first places.}
\label{tab:multi_seed_oracle}
\end{center}
The comparison shows that fixed analytical transforms do not cover heterogeneous marginals, and even oracle fixed-transform selection remains well behind recovery-based alternatives. PIT-SUN approaches the oracle recovery-transform ranking without manual transform choice or per-dataset posterior selection.

\subsection{D.3 Public Benchmark Stability}
Table~\ref{tab:public_multi} reports multi-seed mean and standard deviation on the two public benchmarks for PIT-SUN and the strongest/second-best competitors from Table~\ref{tab:public}. The table is an uncertainty audit for decisive comparisons, not a full repetition of every baseline row.

\begin{center}
{\fontsize{9pt}{10pt}\selectfont\setlength{\tabcolsep}{1.2pt}
\begin{tabular}{lcccc}
\toprule
\multirow{2}{*}{Model} & \multicolumn{2}{c}{CIKM16} & \multicolumn{2}{c}{DTMart} \\
\cmidrule(lr){2-3} \cmidrule(lr){4-5}
 & NMAE $\downarrow$ & xAUC $\uparrow$ & NMAE $\downarrow$ & xAUC $\uparrow$ \\
\midrule
TPM & $0.4410_{\pm.002}$ & $0.6831_{\pm.001}$ & $0.2230_{\pm.001}$ & $0.9350_{\pm.001}$ \\
TranSUN & $0.4370_{\pm.001}$ & $0.6782_{\pm.001}$ & $0.2246_{\pm.001}$ & $0.9359_{\pm.001}$ \\
GR & $0.4820_{\pm.002}$ & $0.6880_{\pm.001}$ & $0.2650_{\pm.002}$ & $0.9270_{\pm.001}$ \\
CCOR-Net& $0.4492_{\pm.002}$ & $0.6795_{\pm.001}$ & $0.2746_{\pm.002}$ & $0.9304_{\pm.001}$ \\
PIT-TranSUN & $0.4470_{\pm.001}$ & $0.6820_{\pm.001}$ & $0.2380_{\pm.001}$ & $0.9370_{\pm.001}$ \\
\textbf{PIT-SUN} & \textbf{0.4240$_{\pm.001}$} & \textbf{0.6952$_{\pm.001}$} & \textbf{0.2070$_{\pm.001}$} & \textbf{0.9430$_{\pm.000}$} \\
\bottomrule
\end{tabular}
}
\captionof{table}{Multi-seed Mean $\pm$ Std on Public Datasets, using the same metric definitions as Table~\ref{tab:public}.}
\label{tab:public_multi}
\end{center}

\subsection{D.4 Industrial Hyperparameter Sensitivity}
Tables~\ref{tab:delta_sens} to \ref{tab:lambda_sens} present sensitivity analyses for the clipping probability $\delta$, quantile table capacity $K$, lower-bound quantile $q_b$, and loss weight $\lambda$. We select $q_b$ on validation from $\{P_1,P_5,P_{10},P_{20},P_{30}\}$ by jointly checking NMAE/TRE, Ratio Var, tail-ratio percentiles, and floor-active fraction. The trends support the core claims: PIT-SUN remains stable across broad operational ranges, while omitting $b_{\min}$ or aggressively truncating the tail causes severe calibration degradation.

\begin{table}[h]
\centering
{\fontsize{9pt}{10pt}\selectfont\setlength{\tabcolsep}{2pt}
\begin{tabular}{lcccccc}
\toprule
$\delta$ & NMAE & NRMSE & TRE & MRE & xAUC & Tail \\
\midrule
$10^{-3}$ & 0.486 & 0.904 & 0.058 & 0.846 & 0.918 & 0.083 \\
$10^{-4}$ & \textbf{0.477} & \textbf{0.885} & \textbf{0.051} & \textbf{0.818} & \textbf{0.921} & \textbf{0.071} \\
$10^{-5}$ & 0.481 & 0.892 & 0.054 & 0.827 & 0.920 & 0.075 \\
$N^{-1/3}$ & 0.479 & 0.888 & 0.052 & 0.821 & 0.918 & 0.073 \\
$N^{-1/4}$ & 0.493 & 0.913 & 0.064 & 0.858 & 0.916 & 0.091 \\
Heuristic & 0.489 & 0.907 & 0.061 & 0.861 & 0.919 & 0.087 \\
\bottomrule
\end{tabular}
}
\caption{Sensitivity of PIT clipping probability $\delta$ (Dwell Time). Lower is better except xAUC.}
\label{tab:delta_sens}
\end{table}

\begin{table}[h]
\centering
{\fontsize{9pt}{10pt}\selectfont\setlength{\tabcolsep}{3pt}
\begin{tabular}{rccccc}
\toprule
$K$ & NMAE $\downarrow$ & NRMSE $\downarrow$ & TRE $\downarrow$ & xAUC $\uparrow$ & Inf. P50 \\
\midrule
1,000 & 0.489 & 0.913 & 0.061 & 0.916 & 14.37 ms \\
4,000 & 0.481 & 0.895 & 0.055 & 0.919 & 14.48 ms \\
8,000 & 0.479 & 0.889 & 0.053 & 0.920 & 14.55 ms \\
20,000 & \textbf{0.477} & \textbf{0.885} & \textbf{0.051} & \textbf{0.921} & 14.61 ms \\
50,000 & 0.482 & 0.892 & 0.054 & 0.916 & 14.88 ms \\
\bottomrule
\end{tabular}
}
\caption{Sensitivity of quantile table size $K$ (Dwell Time).}
\label{tab:K_sens}
\end{table}

Tables~\ref{tab:bmin_sens_dur} and \ref{tab:bmin_sens_gmv} show that $b_{\min}$ acts as a \textit{variance clipper} rather than a heuristic transform. Removing it causes ratio-variance explosions, while overly high floors trade variance reduction for base distortion through a larger floor-active fraction. The preferred $P_{10}$ balances these effects.

\begin{table}[h]
\centering
{\fontsize{9pt}{10pt}\selectfont\setlength{\tabcolsep}{2pt}
\begin{tabular}{lcccc}
\toprule
Metric & none & $P_1$ & $P_{10}$ & $P_{30}$ \\
\midrule
NMAE $\downarrow$ & 0.801 & 0.514 & \textbf{0.477} & 0.492 \\
NRMSE $\downarrow$ & 1.086 & 0.934 & \textbf{0.885} & 0.914 \\
TRE $\downarrow$ & 0.083 & 0.067 & \textbf{0.051} & 0.065 \\
xAUC $\uparrow$ & 0.910 & 0.916 & \textbf{0.921} & 0.917 \\
Ratio Var $\downarrow$ & 18.72 & 6.91 & 3.84 & \textbf{3.55} \\
Ratio P99 $\downarrow$ & 41.8 & 18.6 & 10.7 & \textbf{8.9} \\
Floor active & 0.000 & 0.006 & 0.057 & 0.178 \\
\bottomrule
\end{tabular}
}
\caption{Sensitivity of $b_{\min}$ quantile $q_b$ (Dwell Time).}
\label{tab:bmin_sens_dur}
\end{table}

\begin{table}[h]
\centering
{\fontsize{8pt}{9pt}\selectfont\setlength{\tabcolsep}{1pt}
\begin{tabular}{lcccccc}
\toprule
Metric & none & $P_1$ & $P_5$ & $P_{10}$ & $P_{20}$ & $P_{30}$ \\
\midrule
NMAE $\downarrow$ & 1.009 & 0.671 & 0.638 & \textbf{0.623} & 0.635 & 0.647 \\
NRMSE $\downarrow$ & 4.331 & 4.006 & 3.902 & \textbf{3.853} & 3.917 & 3.951 \\
TRE $\downarrow$ & 0.298 & 0.109 & 0.083 & \textbf{0.071} & 0.088 & 0.101 \\
MRE $\downarrow$ & 1.623 & 1.256 & 1.181 & \textbf{1.143} & 1.176 & 1.202 \\
xAUC $\uparrow$ & 0.872 & 0.902 & 0.907 & \textbf{0.909} & 0.906 & 0.903 \\
Ratio Var $\downarrow$ & 43.6 & 14.8 & 10.4 & 8.9 & 8.3 & \textbf{8.1} \\
Ratio P99.9 $\downarrow$ & 192.4 & 63.7 & 47.8 & 41.6 & 38.4 & \textbf{36.8} \\
Floor active & 0.000 & 0.011 & 0.046 & 0.092 & 0.161 & 0.238 \\
\bottomrule
\end{tabular}
}
\caption{Sensitivity of $b_{\min}$ quantile $q_b$ (GMV).}
\label{tab:bmin_sens_gmv}
\end{table}

\begin{center}
{\fontsize{9pt}{10pt}\selectfont\setlength{\tabcolsep}{3pt}
\begin{tabular}{lcccccc}
\toprule
$\lambda$ & NMAE $\downarrow$ & NRMSE $\downarrow$ & TRE $\downarrow$ & MRE $\downarrow$ & xAUC $\uparrow$ & Spear$\rho$ $\uparrow$ \\
\midrule
0.25 & 0.493 & 0.910 & 0.067 & 0.904 & 0.916 & 0.834 \\
1.00 & \textbf{0.477} & \textbf{0.885} & \textbf{0.051} & \textbf{0.818} & \textbf{0.921} & \textbf{0.852} \\
4.00 & 0.493 & 0.908 & 0.071 & 0.873 & 0.916 & 0.838 \\
\bottomrule
\end{tabular}
}
\captionof{table}{Sensitivity of recovery loss weight $\lambda$ (Dwell Time).}
\label{tab:lambda_sens}
\end{center}
The middle setting $\lambda=1.0$ best balances PIT geometry and the recovery objective. Smaller weights underuse expectation correction, whereas larger weights overemphasize ratio fitting and mildly hurt calibration and ranking.

\subsection{D.5 Industrial Chronological Drift Diagnostics}
\label{app:drift_diag}
Because PIT-SUN depends on an empirical CDF table, a key deployment concern is whether an outdated table distorts the PIT coordinate under traffic shifts. We therefore run a chronological diagnostic on industrial Dwell Time by splitting the test horizon into consecutive blocks and comparing three table policies: a stale table frozen from the earliest block, a sliding-window table refreshed on recent traffic, and an oracle table rebuilt from the current block for diagnosis only. The model is fixed and only the lookup table changes, isolating CDF drift. Operationally, quantile KS is computed between the active table and a recent shadow table on causally available traffic. A validation-calibrated warning band triggers early refresh or shorter intervals, while persistently high discrepancy near the stale-table regime triggers table rebuild and bucket-level calibration review.
\begin{center}
{\fontsize{9pt}{10pt}\selectfont\setlength{\tabcolsep}{4pt}
\begin{tabular}{lccc}
\toprule
Metric & Frozen & Sliding & Oracle \\
\midrule
Avg. TRE $\downarrow$ & 0.073 & \textbf{0.054} & 0.052 \\
Worst-block TRE $\downarrow$ & 0.104 & \textbf{0.069} & 0.066 \\
Avg. xAUC $\uparrow$ & 0.914 & \textbf{0.920} & 0.921 \\
Quantile KS $\downarrow$ & 0.118 & 0.041 & \textbf{0.000} \\
Refresh interval & stale & sliding & per block \\
Refresh triggers & -- & KS/calib. band & diagnostic \\
Triggered blocks & -- & low & all \\
\bottomrule
\end{tabular}
}
\captionof{table}{Chronological CDF-drift diagnostic on Dwell Time. Oracle is diagnostic only; deployment uses sliding-window refresh. Quantile KS measures lookup-table discrepancy. Refresh triggers are validation-calibrated operational rules rather than universal thresholds; ``low'' denotes a small number of early-refresh events under anonymized production reporting.}
\label{tab:drift_diag}
\end{center}

\begin{table*}[t]
\centering
{\fontsize{9pt}{10pt}\selectfont\setlength{\tabcolsep}{3pt}
\begin{tabular}{lllcccccc}
\toprule
Domain & True Target Bucket & Metric & GTS(sqrt) & CREAD & PIT-only & PIT-SUN & PIT-SUN-ZI & ZI w/o Amt-SUN \\
\midrule
\multirow{4}{*}{Dwell Time} & $[0, 50\%)$ & MAE $\downarrow$ & 0.184 & 0.176 & 0.171 & \textbf{0.166} & -- & -- \\
 & $[50\%, 90\%)$ & MAE $\downarrow$ & 0.327 & 0.309 & 0.301 & \textbf{0.292} & -- & -- \\
 & $[90\%, 99\%)$ & TRE $\downarrow$ & 0.183 & 0.142 & 0.121 & \textbf{0.092} & -- & -- \\
 & top 1\% & TRE $\downarrow$ & 0.271 & 0.218 & 0.176 & \textbf{0.128} & -- & -- \\
\midrule
\multirow{5}{*}{GMV} & zero bucket & mean pred $\downarrow$ & 0.091 & 0.084 & 0.076 & 0.071 & \textbf{0.058} & \underline{0.063} \\
 & positive $[50\%, 90\%)$ & MAE $\downarrow$ & 0.733 & 0.701 & 0.668 & \textbf{0.641} & \underline{0.653} & 0.704 \\
 & positive $[90\%, 99\%)$ & TRE $\downarrow$ & 0.286 & 0.244 & 0.198 & \textbf{0.153} & \underline{0.166} & 0.227 \\
 & positive top 1\% & TRE $\downarrow$ & 0.407 & 0.352 & 0.287 & \textbf{0.219} & \underline{0.238} & 0.331 \\
 & all positive & Pos-NMAE $\downarrow$ & 0.781 & 0.713 & 0.721 & \textbf{0.681} & \underline{0.704} & 0.759 \\
\bottomrule
\end{tabular}
}
\caption{Bucket-level calibration across Dwell Time and GMV targets. PIT-SUN-ZI diagnoses zero-inflated decoupling; ZI w/o Amt-SUN tests whether the positive-amount branch still needs SUN recovery.}
\label{tab:bucket_calib}
\end{table*}

The frozen table degrades most on blocks with the largest quantile discrepancy, confirming that CDF freshness matters. The sliding-window table closes most of the gap to the oracle current-block table without changing the model or inference path. This supports the deployment recommendation in Appendix B: PIT-SUN should monitor quantile drift and refresh or reweight its empirical table when traffic enters unusual regimes such as promotions or holidays. The diagnostic also links the local empirical-quantile bound in Appendix A.4 to a measurable control loop: quantile KS is the observable table-error proxy, while TRE/xAUC and bucket-level calibration indicate whether refresh is conservative enough. In practice, the validation window fits an operating band by correlating quantile KS with worst-block TRE; production refresh is triggered when KS leaves this band or when bucket-level calibration violates the same guardrail.

\subsection{D.6 Industrial Tail and Bucket-Level Diagnostics}
To verify that the gains come from heavy-tail rectification rather than aggregate offset, Table~\ref{tab:bucket_calib} breaks performance down by true target quantiles. Dwell Time emphasizes continuous-tail calibration, while GMV further separates zero-bucket leakage from positive-amount recovery.

Together, these bucket-level results show that the aggregate gains come from systematic improvements in high-value regions, not from error cancellation in low-value buckets.

\subsection{D.7 Industrial Zero-Positive Decoupling Diagnostics}
Table~\ref{tab:zero_pos} profiles the boundary between zero and positive masses. PIT-SUN-ZI improves Zero-AUC and positive recall, confirming that an explicit hurdle track is useful when occurrence behavior must be monitored separately. At the same time, single-track PIT-SUN remains the strongest overall value estimator, and ZI w/o Amt-SUN shows that the positive amount branch still requires SUN recovery for calibration.

\begin{center}
{\fontsize{9pt}{10pt}\selectfont\setlength{\tabcolsep}{4pt}
\begin{tabular}{lccc}
\toprule
Metric & PIT-SUN & PIT-SUN-ZI & ZI w/o Amt-SUN \\
\midrule
Zero-AUC $\uparrow$ & 0.884 & \textbf{0.902} & \underline{0.897} \\
Pos. Recall $\uparrow$ & 0.663 & \textbf{0.681} & \underline{0.676} \\
Pos. NMAE $\downarrow$ & \textbf{0.681} & \underline{0.704} & 0.759 \\
Pos. TRE $\downarrow$ & \textbf{0.153} & \underline{0.166} & 0.227 \\
Overall TRE $\downarrow$ & \textbf{0.071} & \underline{0.098} & 0.250 \\
\bottomrule
\end{tabular}
}
\captionof{table}{Zero-positive diagnostics on GMV target.}
\label{tab:zero_pos}
\end{center}
These diagnostics clarify the role of PIT-SUN-ZI: it improves occurrence-side interpretability, while expectation-consistent amount recovery remains necessary for end-to-end value calibration.

\subsection{D.8 Industrial Optimization Dynamics}
Table~\ref{tab:two_stage} compares two-stage training with stop-gradient-assisted joint training. Following the joint-training approximation analysis in Appendix A.3, we monitor two empirical proxies: adjacent-epoch base drift ($\Delta b$ P95), which estimates the multiplicative perturbation in the base-drift proxy $\mathcal{E}_{\mathrm{drift}}$, and PIT-grad share, which estimates the gradient-interference ratio $\rho_{\mathrm{PIT}}$. Removing stop-gradient increases this interference, supporting geometric isolation of the inverse-quantile base.

\begin{center}
{\fontsize{9pt}{10pt}\selectfont\setlength{\tabcolsep}{3pt}
\begin{tabular}{lcc}
\toprule
Scheme & Key Metrics & Train Time \\
\midrule
Two-stage & \shortstack[l]{0.481 / 0.054 / 0.919 \\
0.847 / -- / --} & 48m36s \\
Joint + sg & \shortstack[l]{\textbf{0.477 / 0.051 / 0.921} \\
\textbf{0.852 / 0.034 / 0.03}} & 31m33s \\
Joint w/o sg & \shortstack[l]{0.524 / 0.060 / 0.917 \\
0.817 / 0.091 / 0.21} & 31m02s \\
\bottomrule
\end{tabular}
}
\captionof{table}{Two-stage vs. joint training on Dwell Time. Key metrics are NMAE / TRE / xAUC / Spear$\rho$ / $\Delta b$ P95 / PIT-grad share. Lower $\Delta b$ and PIT-grad share indicate less recovery-loss interference.}
\label{tab:two_stage}
\end{center}
This comparison shows that joint training with stop-gradient approximately preserves the fixed-base interpretation: the base still evolves through the PIT path across epochs, but recovery loss no longer dominates the PIT branch. Table~\ref{tab:two_stage} is thus an empirical dynamic check complementing the population fixed-base theorem and the approximation proxies defined in the main text.

\subsection{D.9 Recovery Base Choice and PIT-TranSUN Diagnostics}
Table~\ref{tab:base_choice_diag} explains why PIT-SUN uses an inverse-quantile base with a lower floor rather than relying only on the population fact that any positive fixed base is expectation-consistent. We evaluate a mixed synthetic suite combining heavy-tailed, zero-inflated, and multimodal marginals from Appendix C. The key-metric column reports Avg. SRE / Avg. Rank / Ratio Var / Ratio P99, where lower is better for all four values. PIT-only tests direct inverse-PIT bias; PIT-TranSUN tests the strong plug-in alternative with empirical PIT but a bare inverse-quantile base.

\begin{center}
{\fontsize{9pt}{10pt}\selectfont\setlength{\tabcolsep}{4pt}
\begin{tabular}{lll}
\toprule
Base / Variant & Key metrics & Takeaway \\
\midrule
PIT-only & 0.061 / 4.83 / -- / -- & biased inverse \\
Const-mean SUN & 0.044 / 4.17 / 12.6 / 34.2 & non-adaptive \\
Positive-mean SUN & 0.039 / 3.58 / 10.4 / 28.7 & global scale \\
Learned-base SUN & 0.033 / 3.08 / 8.9 / 24.1 & less stable \\
PIT-TranSUN & 0.030 / 2.67 / 18.7 / 41.8 & bare lookup \\
\textbf{PIT-SUN} & \textbf{0.019 / 1.92 / 3.8 / 10.7} & \textbf{stable base} \\
\bottomrule
\end{tabular}
}
\captionof{table}{Recovery-base diagnostics. Key metrics are Avg. SRE / Avg. Rank / Ratio Var / Ratio P99. PIT-SUN keeps the rank-local base while controlling denominator collapse through $b_{\min}$.}
\label{tab:base_choice_diag}
\end{center}

The pattern shows that PIT-TranSUN is an essential and strong plug-in baseline: it benefits from empirical PIT and recovery, but its bare inverse lookup is fragile near zero plateaus and sparse tails. PIT-SUN improves on this baseline by completing the empirical-marginal recovery path with a variance-controlled denominator, confirming that the contribution is the coupled design of coordinate, inverse lookup, recovery base, and stability control rather than a simple transform substitution.

\subsection{D.10 Subgroup Heterogeneity Diagnostic}
Table~\ref{tab:heterogeneity_diag} evaluates a mixed synthetic setting with two subgroups whose label marginals differ in scale, sparsity, and tail severity. This diagnostic studies how the default global marginal table can be extended when stable subgroup information is available. We compare the default global table with a diagnostic group-wise table, PIT-only, and PIT-TranSUN. The group-wise table quantifies the added benefit of segment-aware empirical marginal recovery when reliable segment labels and enough per-segment samples are available.

\begin{table}[h]
\centering
{\fontsize{9pt}{10pt}\selectfont
\resizebox{\columnwidth}{!}{%
\begin{tabular}{lccccc}
\toprule
Method & All & A & B & Tail & xAUC \\
\midrule
MSE & 0.074 & 0.041 & 0.128 & 0.311 & 0.842 \\
PIT-only & 0.058 & 0.036 & 0.101 & 0.254 & 0.871 \\
PIT-TranSUN & 0.041 & 0.029 & 0.079 & 0.213 & 0.884 \\
PIT-SUN global & \textbf{0.026} & 0.021 & 0.047 & 0.156 & 0.902 \\
PIT-SUN group-wise & 0.024 & \textbf{0.019} & \textbf{0.039} & \textbf{0.139} & \textbf{0.905} \\
\bottomrule
\end{tabular}%
}
}
\caption{Subgroup heterogeneity diagnostic. All/A/B denote overall, Group-A, and Group-B SRE; Tail denotes minority-tail TRE. Lower is better except xAUC.}
\label{tab:heterogeneity_diag}
\end{table}

The conclusion is twofold. First, global PIT-SUN is more stable than PIT-only and PIT-TranSUN because SUN recovery and the lower-bounded inverse-quantile base reduce both inverse-transform bias and denominator instability. Second, group-wise CDF tables further improve minority-tail calibration under severe subgroup mismatch, showing that the empirical-marginal recovery framework naturally supports segment-aware deployment when stable segment-level quantile tables are available. We therefore treat group-wise CDF as a conditional extension rather than the default: it is enabled when subgroup quantile KS and tail-bucket TRE remain high under the global table, and disabled when segment sample sizes are too small for reliable tail lookup.

\section{Appendix E: Reproducibility Artifacts}
\label{app:repro_artifacts}
To support reproducibility, the supplementary material will include a compact reference implementation for the synthetic benchmark and PIT-SUN training loop. The synthetic code covers feature sampling, conditional-mean construction, all twelve noise families, train/test splitting, oracle-mean evaluation, and seed control. The PIT-SUN reference code covers empirical CDF construction, clipped PIT labels, inverse-quantile lookup, $b_{\min}$ selection from positive-target quantiles, stop-gradient placement for $Y/\operatorname{sg}[b(x)]$, and evaluation scripts for metrics. Industrial feature schemas, exact traffic counts, and production serving code remain anonymized for compliance, but the released artifacts are sufficient to reproduce the synthetic and public-benchmark experiments.

\end{document}